\newtheoremstyle{break}%
{}{}%
{\itshape}{}%
{\bfseries}{}%
{\newline}%
{}
\newtheoremstyle{breakdf}%
{}{}%
{\rm}{}%
{\bfseries}{}%
{\newline}%
{}
\theoremstyle{break}
\newtheorem*{corollary}{Corollary}
\newtheorem{theorem}{Theorem}
\theoremstyle{breakdf}
\newtheorem{algo}{Algorithm}
\newcommand{\Prob}{\textnormal{Prob}}
\renewcommand{\vec}[1]{\mathbf{#1}}
\DeclareMathOperator{\intr}{int}
\DeclareMathOperator{\bd}{bd}
\setlist{leftmargin=*,itemsep=0pt}
\begin{document}

\title{\Large Stable variation in multidimensional competition\thanks{This manuscript is a pre-final version of a paper published in A.~Breitbarth, M.~Bou\-zouita, L.~Danckaert \& M.~Farasyn (eds.), \emph{The determinants of diachronic stability} (Amsterdam, John Benjamins, 2019, pp.~263--290, https://doi.org/10.1075/la.254.11kau). For citations involving page numbers or verbatim quotations, please refer to the publisher's version. Copyright \copyright{} John Benjamins Publishing Company. The publisher should be contacted for permission to re-use or reprint the material in any form.}}
\author{\normalsize Henri Kauhanen\\\normalsize\emph{The University of Manchester}}
\date{\normalsize 2019}
\maketitle


\begin{abstract}
The Fundamental Theorem of Language Change \citep{Yan2000} implies the impossibility of stable variation in the Variational Learning framework, but only in the special case where two, and not more, grammatical variants compete. Introducing the notion of an advantage matrix, I generalize Variational Learning to situations where the learner receives input generated by more than two grammars, and show that diachronically stable variation is an intrinsic feature of several types of such multiple-grammar systems. This invites experimentalists to take the possibility of stable variation seriously and identifies one possible place where to look for it: situations of complex language contact.
\end{abstract}

\section{Variation, learning and diachronic stability}\label{sec:introduction}

Since its introduction in a series of publications by Yang in the early noughties \citep{Yan1999,Yan2000,Yan2002b,Yan2002,Yan2004}, the Variational Learner has stirred much interest among those working in the field of language variation and change: given its inherently probabilistic nature, the Variational Learning paradigm successfully formalizes many aspects of the competing grammars framework \citep{Kro1994}, in which the simultaneous existence of a number of grammatical options in the mind of a speaker is taken for granted. As far as \emph{change} is concerned, however, this intra-speaker existence of multiple grammars has been considered \emph{diachronically unstable}, in the sense that over iterated generational learning interactions, grammar competition leads, ultimately, to a stable state of dominance by some single grammar. This mathematical fact, formulated as the Fundamental Theorem of Language Change by \citet{Yan2000}, dovetails with the theoretico-empirical claim that all morphosyntactic variation between two forms competing for a single function results, over time, in either the extinction of one form, or a functional specialization of the two forms by which the competition is escaped \citep{Kro1994,Wal2016} -- in either case, diachronically stable variation between two values of a single variable is thought to be impossible because of a general cognitively motivated blocking effect that militates against stable doublets \citep{Aro1976}.

In this paper, I wish to draw attention to the fact that Variational Learning only predicts this outcome in the case where two, and not more than two, variants compete in a speaker population. An analysis of both the classical Variational Learner and its parametrically constrained variation, the Naive Parameter Learner, reveals that in the general case -- when more than two grammars compete -- the situation is strikingly different. The Fundamental Theorem gives way to more complicated, even non-monotonic trajectories of change; to bifurcations; and, in many cases, to truly stable variation in which the competing variants do not (or need not) specialize functionally. Since language learners need to set the values of multiple parameters and hence make a choice in a high-dimensional space of possible grammars, these results question whether the Variational Learner can, in fact, explain the (purported) non-occurrence of stable variation. On the other hand, the results invite experimentalists to consider the possibility that when more than two variants come to compete, stable variation may in fact be predicted by general human learning mechanisms (assuming, \emph{ex hypothesi}, that the reinforcement learning algorithm at the heart of the Variational Learner carries psychological realism).

To begin, it is incumbent on us to make the relevant notions of variation and stability as precise as possible. Any system capable of change is a dynamical system whose behaviour may be modelled using a set of difference equations -- if the time variable is taken as discrete -- or a set of differential equations -- if time is considered continuous. The choice of one or the other description is largely arbitrary; in this paper, I will stick to discrete time, but all the results are valid for a continuous-time description as well (by letting the inter-generational time step tend to zero and examining the resulting differential equations). I then define a \emph{language system} to be a probability distribution $\vec p = (p_1, \dots , p_n)$ over a finite set of possible grammars $G_1,\dots , G_n$, together with a set of difference equations
\begin{equation}\label{eq:gen-diff-eq}
  p_i' = f_i(\vec p) \quad (i = 1,\dots ,n)
\end{equation}
which define the system's dynamics. Here, $p_i'$ is the \emph{successor} of $p_i$; in other words, $p_i'$ is the value of the $i$th variable at time $t+1$ given that the state of the entire system at time $t$ was $\vec p = (p_1,\dots ,p_n)$. The functions $f_i$ are, in the general case, real-valued functions; they assume some concrete form as soon as concrete assumptions are made about learning, linguistic interaction, the existence of a critical period, and so on. The probabilities $p_i$ themselves, $0 \leq p_i \leq 1$, describe the probability of use of the different competing grammars, in the usual sense: in a sequence of $k$ utterances, roughly $p_ik$ utterances will be produced by grammar $G_i$ if $k$ is large. These probabilities may be taken to describe either a single individual or an entire community of speakers: clearly, both individual and community-level probabilities may change over time, but the corresponding functions $f_i$ in \eqref{eq:gen-diff-eq} may be rather different in the two cases. In what follows, I will always take $p_i$ to refer to community-level probabilities and will denote probabilities at the level of individuals with corresponding Greek letters, $\pi_i$.

Taking the $p_i$ as community-level probabilities, then, let us proceed to define the notions of variation and stability on the level of speech communities. Intuitively, variation exists if at least two grammars are used with non-zero probability. It then makes sense to define a \emph{state of variation} as a probability state $\vec p = (p_1, \dots ,p_n)$ which satisfies $p_i < 1$ for all $i$, for it is precisely under this condition that no single grammar gets to claim all of the available probability mass. Defining the concomitant notion of diachronic stability is a bit trickier, and I shall begin by presenting a physical analogue.

Consider a non-ideal pendulum (Figure \ref{fig:pendula}A). By non-ideal, I mean to imply that we are \emph{not} excluding frictional forces by way of idealization. Such a pendulum is also known as a damped pendulum, and the defining characteristic of its dynamics is the existence of a \emph{rest point} directly below the point of attachment: if the pendulum is ever found in this position, it will not move, barring application of an external force.\footnote{In the corresponding mathematical description, a rest point is identified as a state $\vec x$ which satisfies $\vec x' = \vec x$ or equivalently $\vec x' - \vec x = \vec 0$, that is, as a zero-change state. In the vast literature on dynamical systems, rest points are also known as rest states, fixed points, equilibria, and steady states. The last term, sometimes encountered in discussions of language change, is somewhat unfortunate because of the semantic similarity of the pre-theoretical terms `steady' and `stable' -- as we will see presently, not all steady states are stable, in the technical sense.} Moreover, if the pendulum is set in motion from some other initial state, it will ultimately come to a halt at this rest point after a period of diminishing oscillation. Such a rest point is said to be \emph{asymptotically stable}. More precisely, a rest point $\vec x$ in the state space of a dynamical system is asymptotically stable if a neighbourhood of states around $\vec x$ exists such that all trajectories from this neighbourhood converge to $\vec x$ as time tends to infinity.

\begin{figure}
  \centering
  \includegraphics[width=\textwidth]{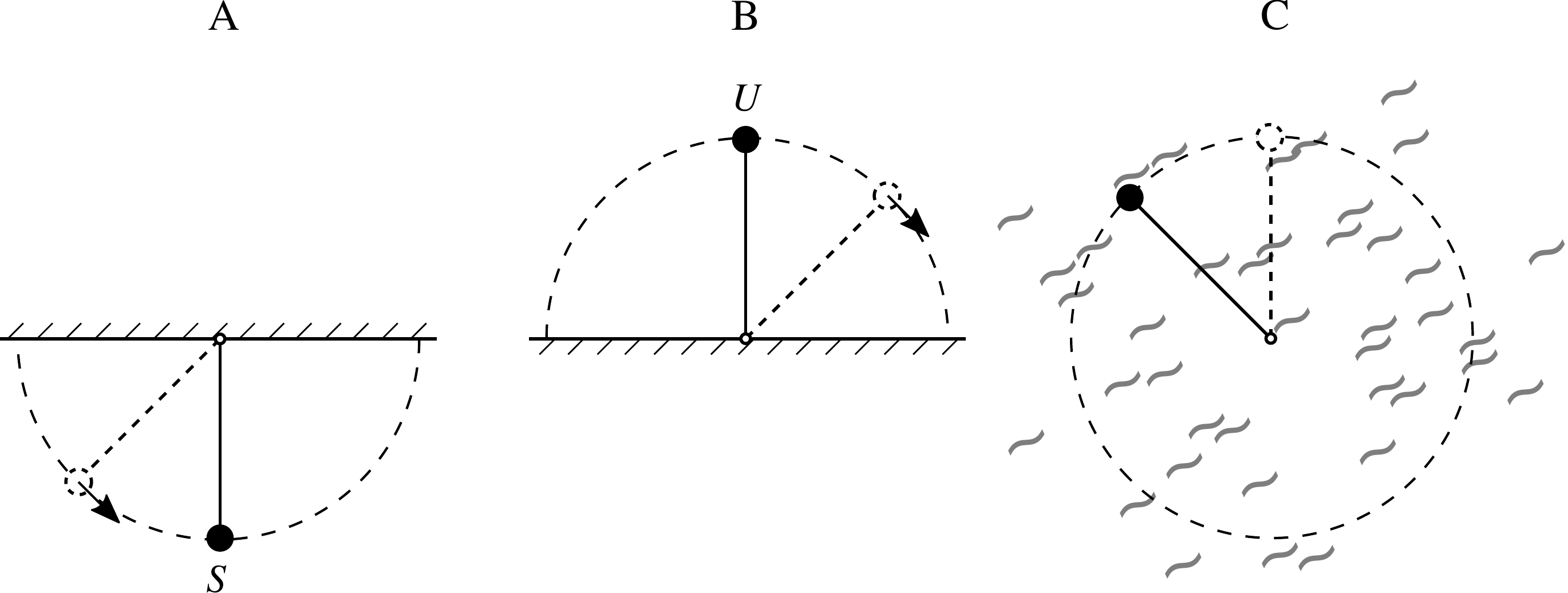}
  \caption{A damped pendulum (A), an inverted pendulum (B) and a ``goo pendulum'' (C). $S$: stable rest point, $U$: unstable rest point.}\label{fig:pendula}
\end{figure}

Now consider the inverted pendulum of Figure \ref{fig:pendula}B. This pendulum, too, has a rest point, now directly above the point of attachment. Theoretically, if it were possible to balance the pendulum with infinite precision at this rest point, it would not move, since the horizontal component of the sum of the forces acting on the pendulum is zero at this point (we assume the pendulum is fixed to a stiff rod). Even a slight disturbance to the inverted pendulum will, however, nudge it away from the rest point. Such a rest point is \emph{unstable}, since all trajectories from \emph{any} local neighbourhood around the rest point take the system state away from the rest point.

Finally, consider the ``goo pendulum'' of Figure \ref{fig:pendula}C. Here the pendulum is submerged in a hypothetical goo of infinite viscosity which supports the pendulum but allows its movement when a suitable external force is applied (for a physically realistic approximation, we may think of a low-mass pendulum, such as a needle, submerged in a high-viscosity fluid such as honey). This pendulum will not move from any initial condition. Every possible position of the pendulum is a rest point, and they are all neither asymptotically stable nor unstable. The characteristic behaviour of these \emph{non-asymptotically stable} states is that, given a perturbation, the system will move to a different, close-by point, but is not ``actively'' repelled by the rest point nor attracted back to it.

These notions translate directly into our framework of language systems and may now be used to explicate the idea of stable variation. I define a \emph{state of stable variation} to be a probability vector $\vec p = (p_1, \dots ,p_n)$ satisfying the following three conditions simultaneously:
\begin{enumerate}
  \item $\vec p$ is a state of variation ($p_i < 1$ for all $i$)
  \item $\vec p$ is a rest point ($p_i' - p_i = 0$ for all $i$)
  \item $\vec p$ is asymptotically stable
\end{enumerate}
I do not include non-asymptotically stable rest points in this definition since, as per the above discussion, they are not resilient to perturbations. Crucially, given that real-life systems always contain a source of noise, which we may think of as a perturbation to the state of a deterministic system such as \eqref{eq:gen-diff-eq}, such states do not count as truly stable.

\section{Two grammars}\label{sec:2D}

With these notions in hand we may proceed to a formal study of variation and stability in the Variational Learning framework, beginning with a summary restatement of the already familiar two-grammar case.

In \citet{Yan2000}, language change is reduced to language acquisition by assuming that language learners employ a specific learning strategy, the linear reward--penalty (henceforth, LRP) learning algorithm originating in Bush and Mosteller's \citeyearpar{BusMos1955} early work on reinforcement learning and most usefully synthesized by \citet{NarTha1989}. This allows one to close the population-dynamical equations \eqref{eq:gen-diff-eq}. Specifically, assume the learner needs to make a decision between two grammars $G_1$ and $G_2$ which are used in the community with probabilities $p_1$ and $p_2$. Writing $\pi_1$ and $\pi_2$ for the learner's hypothesis (i.e.~$\pi_i$ is the probability with which the learner himself employs $G_i$), the LRP algorithm assumes the following form:
\begin{algo}[LRP, $n=2$; \protect\citealp{NarTha1989}: 110--111]
  \leavevmode\vspace{-\baselineskip}
\begin{enumerate}
  \item Let $\pi_1 = \pi_2 = 1/2$ initially.
  \item Present an input token (sentence) $x$ to the learner. This is generated by $G_1$ with probability $p_1$ and by $G_2$ with probability $p_2$.
  \item Learner picks grammar $G_i$ with probability $\pi_i$.
  \item Suppose the learner picked $G_1$.
    \begin{enumerate}[label=\alph*.]
      \item If $G_1$ parses $x$, the learner increases $\pi_1$ by a small amount and decreases $\pi_2$ by a small amount. Concretely, $\pi_1$ is replaced with $\pi_1 + \gamma(1-\pi_1)$, where $\gamma$ is a small positive number (the \emph{learning rate}), whilst $\pi_2$ is replaced with $(1-\gamma )\pi_2$.
      \item Conversely, if $G_1$ does not parse $x$, the learner decreases $\pi_1$ and increases $\pi_2$. Concretely, $\pi_1$ is replaced with $(1-\gamma )\pi_1$, whilst $\pi_2$ is replaced with $\pi_2 + \gamma (1 - \pi_2)$.
    \end{enumerate}
  \item (If the learner picked $G_2$ instead, execute the previous step with labels $1$ and $2$ interchanged.)
  \item Steps 2--5 are repeated for $T$ input tokens.
\end{enumerate}
\end{algo}
\noindent Thus, during learning, the probabilities $\pi_i$ change in response to the two grammars' success in parsing input generated from the community-level distribution $\vec p = (p_1, p_2)$. For simplicity, the latter is assumed to stay constant for the duration of learning; in learning-theoretic terminology, the learner's environment is a \emph{stationary random environment} \citep{NarTha1989}.

If either $p_1 = 1$ or $p_2 = 1$, then one of the grammars succeeds in parsing any possible input token the learner may encounter. It then follows that in such a case of a homogeneous community, the learner's hypothesis tends to the population state with growing $T$ and the unique target grammar is learnable according to a probabilistic variant of Gold's \citeyearpar{Gol1967} learnability criterion (cf.~\citealp[354]{Niy2002}). If the population state $\vec p = (p_1, p_2)$ is mixed, i.e.~a state of variation, the learner exhibits more interesting behaviour.

Let $\hat \pi_i$ denote the value of $\pi_i$ at the end of learning (at $T$ learning steps), and assume that $T$ is large and that the learning rate $\gamma$ is small. Such a learner shall be called \emph{reliable},\footnote{All results in this paper pertain to systems of reliable learners. The stochastic effects of unreliable learning -- short critical periods or large (``high-temperature'') learning rates -- are underinvestigated in the literature but must be set aside here.} and it can be shown \citep[111--112]{NarTha1989} that, for a reliable learner,
\begin{equation}\label{eq:2D-LRP-approximation}
  \hat \pi_1 \approx \frac{c_2}{c_1 + c_2}\quad \textnormal{and}\quad \hat\pi_2 \approx \frac{c_1}{c_1 + c_2},
\end{equation}
where $c_i$ is the \emph{penalty probability} of grammar $G_i$:
\begin{equation}
  c_i = \Prob (x : G_i\ \textnormal{does not parse}\ x).
\end{equation}
The penalty probabilities are easily determined: we may write $c_1 = a_{2}p_2$ and $c_2 = a_{1}p_1$, where $a_{2}$ is the probability of a sentence parsed by $G_2$ but not by $G_1$, and vice versa for $a_{1}$. Following \citet{Yan2000}, I will call $a_{1}$ the \emph{advantage} of $G_1$ and $a_{2}$ the advantage of $G_2$ (Figure \ref{fig:2D-penalties}).

\begin{figure}
  \centering
  \includegraphics[scale=0.52]{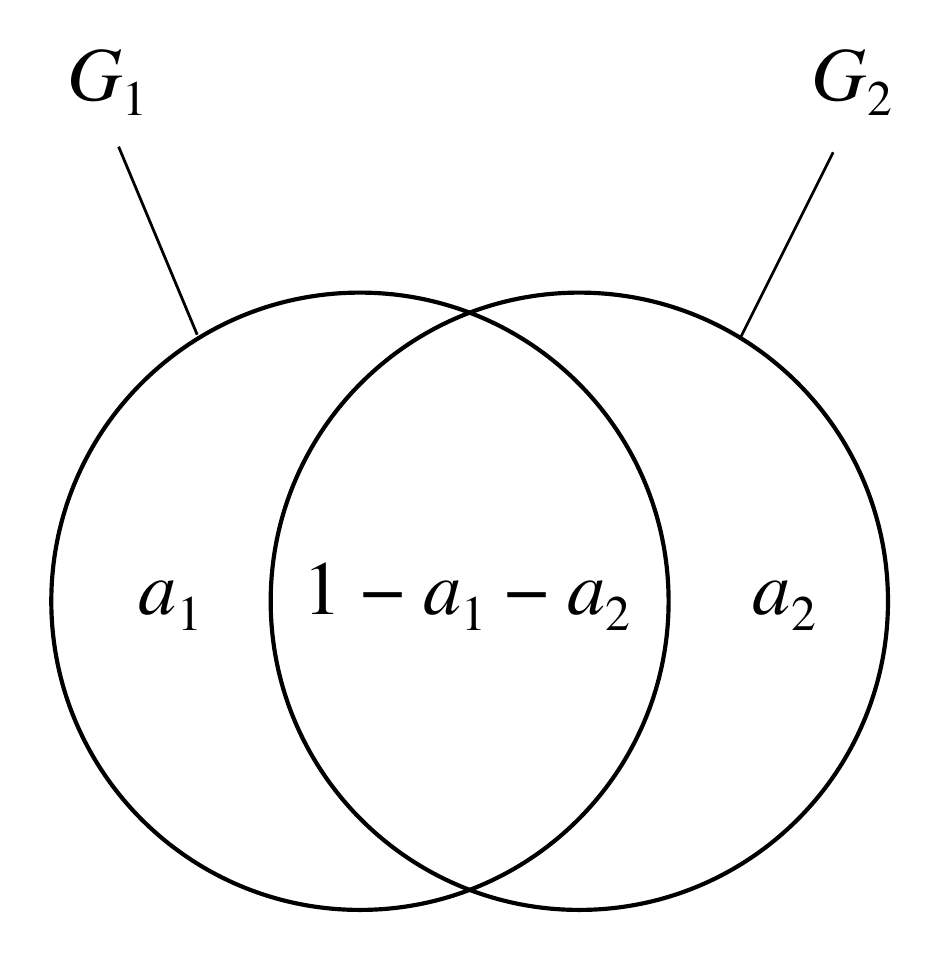}
  \caption{The classical two-grammar setting (after \citealp[238]{Yan2000}, Figure 2). This Venn diagram illustrates all sentences parsed by either grammar; $a_1$ is the probability of a sentence uniquely parsed by $G_1$ and $a_2$ the probability of a sentence uniquely parsed by its competitor $G_2$.}\label{fig:2D-penalties}
\end{figure}

If learners are now arranged in a sequence of non-overlapping generations, the output of generation $t$ feeding as input to the learning process of generation $t+1$, we have the population-level difference equations
\begin{equation}
  p_1' = \frac{a_{1}p_1}{a_{1}p_1 + a_{2}p_2}\quad\textnormal{and}\quad
  p_2' = \frac{a_{2}p_2}{a_{1}p_1 + a_{2}p_2}.
\end{equation}
Bearing in mind that $p_1 + p_2 = 1$, it suffices to work with the single equation
\begin{equation}\label{eq:2grammars-diff-eq}
  p_1' = \frac{a_{1}p_1}{a_{1}p_1 + a_{2}(1-p_1)}.
\end{equation}
The inter-generational increment in $p_1$ is given by $p_1' - p_1$, which by simple algebra is found to equal
\begin{equation}\label{eq:2grammars-intergen}
  p_1' - p_1 = \frac{(a_{1} - a_{2})(1-p_1)p_1}{a_{1}p_1 + a_{2}(1-p_1)}.
\end{equation}
Figuring out the rest points of this system is now an easy task: from \eqref{eq:2grammars-intergen} it is readily seen that $p_1' - p_1 = 0$ if and only if (1) $p_1 = 0$, (2) $p_1 = 1$ or (3) $a_{1} = a_{2}$. Assume first that $a_{1} > a_{2}$. Then the sign of $p_1' - p_1$ is always strictly positive, which means that $p_1$ always grows, no matter what the state $\vec p = (p_1, p_2)$. Hence, the state $(1,0)$ is asymptotically stable and the state $(0,1)$ unstable. With this ordering of the two advantage parameters, $G_1$ will drive $G_2$ out in diachrony, no matter what the initial state of the system. For $a_{1} < a_{2}$, the reverse state of affairs obtains: $(1,0)$ is unstable and $(0,1)$ stable. Now $G_2$ is the winner. Finally, if $a_{1} = a_{2}$, then the rate of change of $p_1$ (and, by necessity, of $p_2$) is zero in \emph{every} possible state $\vec p = (p_1, p_2)$. The state space is filled with an infinity of non-asymptotically stable rest points, and the system resembles the goo pendulum of Figure \ref{fig:pendula}C. With this reasoning, we have proved the following two results:
\begin{theorem}[Fundamental Theorem of Language Change; \protect\citealp{Yan2000}: 239]
  Suppose learners are reliable. Then, in a two-grammar system, $G_1$ wins in diachrony if $a_1 > a_2$, and $G_2$ wins if $a_1 < a_2$.\qed
\end{theorem}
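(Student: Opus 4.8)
The plan is to reduce the entire statement to a one-dimensional sign analysis of the inter-generational increment \eqref{eq:2grammars-intergen}, which I take as already derived. Since $p_1 + p_2 = 1$, the community dynamics live on the interval $p_1 \in [0,1]$, so it suffices to track the single quantity $p_1' - p_1$. First I would observe that the denominator $a_1 p_1 + a_2(1-p_1)$ is a convex combination of the two (strictly positive) advantages and is therefore strictly positive throughout the state space, while the factor $p_1(1-p_1)$ is strictly positive on the open interval $(0,1)$. Consequently the sign of $p_1' - p_1$ on the interior is governed entirely by the sign of $a_1 - a_2$.

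From here the two cases follow by symmetry. Under $a_1 > a_2$ the increment is strictly positive at every interior state, so $p_1$ increases at each step; under $a_1 < a_2$ it is strictly negative, so $p_1$ decreases. The remaining work is to promote this local, one-step monotonicity to the global claim that $G_1$ (respectively $G_2$) \emph{wins in diachrony}. For this I would invoke a monotone-convergence argument: the map $p_1 \mapsto p_1'$ is continuous on the compact interval $[0,1]$ and carries it into itself, so the orbit from any interior initial condition is a bounded monotone sequence and hence converges; by continuity its limit is a fixed point; and since, by the rest-point classification already established, the only fixed points when $a_1 \neq a_2$ are $p_1 = 0$ and $p_1 = 1$, a strictly increasing orbit starting from a positive value must converge to $1$ (and, symmetrically, a strictly decreasing orbit to $0$). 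The same observation shows that the dominated corner is \emph{unstable}, since trajectories started arbitrarily close to it are driven away.

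To verify asymptotic stability in the precise sense required (the third condition in the definition of a state of stable variation), rather than mere convergence, I would also compute the multiplier of the map at each corner. Writing $f(p_1) = a_1 p_1 / (a_2 + (a_1 - a_2)p_1)$, a short differentiation gives $f'(0) = a_1/a_2$ and $f'(1) = a_2/a_1$. The standard linearization criterion for discrete maps then confirms the picture: when $a_1 > a_2$ the winning corner $p_1 = 1$ has multiplier $a_2/a_1 < 1$ and is asymptotically stable, whereas $p_1 = 0$ has multiplier $a_1/a_2 > 1$ and is unstable, the inequalities reversing when $a_1 < a_2$. This multiplier computation has the added virtue of generalizing, through the eigenvalues of the Jacobian, to the higher-dimensional systems the paper turns to next.

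The step I expect to require the most care is the passage from the one-step sign of $p_1' - p_1$ to the infinite-horizon statement of which grammar wins, that is, global attraction to a single corner. The sign analysis yields only a local, infinitesimal-flavoured fact; what makes it add up to global convergence is precisely the \emph{absence} of interior fixed points when $a_1 \neq a_2$, which rules out the orbit stalling before it reaches the boundary. In one dimension this is routine, but it is the logical hinge on which the monotone-convergence argument turns, and it is exactly the feature that will fail, giving rise to stable interior variation, once more than two grammars compete.
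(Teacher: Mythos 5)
Your proposal is correct and rests on the same core argument as the paper's own proof: a sign analysis of the increment $p_1' - p_1 = \frac{(a_1 - a_2)(1-p_1)p_1}{a_1 p_1 + a_2(1-p_1)}$, with positivity of the denominator and of $p_1(1-p_1)$ on the interior reducing everything to the sign of $a_1 - a_2$. Where you go beyond the paper is in rigor rather than in route: the paper passes directly from ``$p_1$ always grows'' to ``$G_1$ wins in diachrony, no matter what the initial state,'' leaving implicit exactly the step you flag as the logical hinge --- that a bounded monotone orbit converges, that its limit must be a fixed point by continuity, and that the absence of interior fixed points when $a_1 \neq a_2$ forces convergence to the appropriate corner. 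Your multiplier computation ($f'(0) = a_1/a_2$, $f'(1) = a_2/a_1$, both correct) is likewise absent from the paper's two-grammar treatment; the paper only deploys linearization (via the Jacobian) in the three-grammar sections, and for $n=2$ simply asserts stability and instability of the corners from the monotonicity. So your write-up supplies two genuine supports that the published argument takes for granted, and your closing observation --- that the whole argument pivots on the absence of interior fixed points, which is precisely what fails for $n \geq 3$ --- is exactly the right way to see why the Fundamental Theorem does not generalize.
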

\begin{theorem}
  No two-grammar system of reliable learners admits stable variation.\qed
\end{theorem}
\noindent Figure \ref{fig:2D-trajectory} illustrates a typical trajectory in a system of two grammars with unequal advantages. The grammar with the greater advantage ousts its competitor both in the case of theoretically perfectly reliable learners (equation \ref{eq:2grammars-diff-eq}) and in the case of learners who receive a finite but large sample of primary linguistic data.

\begin{figure}
  \centering
  \includegraphics[width=\textwidth]{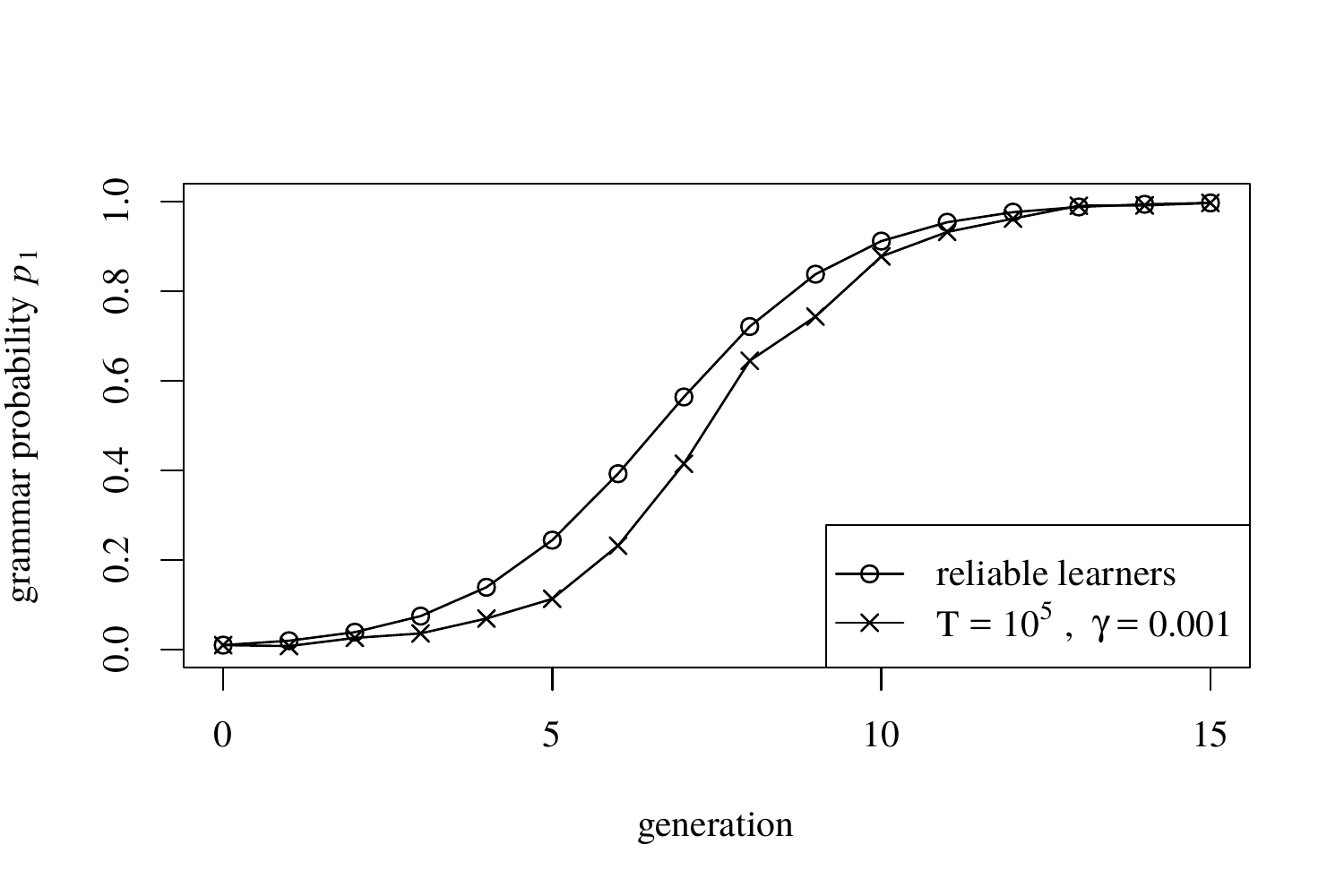}
  \caption{Time evolution of a two-grammar system with $a_1 = 0.2$ and $a_2 = 0.1$, from initial state $(p_1, p_2) = (0.01, 0.99)$, for both theoretically perfectly reliable learners (circles, equation \ref{eq:2grammars-diff-eq}) and for large-sample learners (crosses, from computer simulation; only one realization of the stochastic process shown).}\label{fig:2D-trajectory}
\end{figure}

\section{Advantage matrices and the cyclical balance criterion}\label{sec:3D}

It is not immediately obvious how, or whether, these results generalize to situations where learners are exposed to input from more than two grammars. In fact, extending the model definition itself to such more general cases turns out to be nontrivial. The main difficulty lies in expressing the penalty probabilities $c_i$, which with an increasing number of competing variants assume an increasingly complicated form. This is because in the general case of $n$ competing grammars one has to consider the relative (pairwise) advantages between any two distinct grammars, the number of these advantage relations being $n(n-1) = n^2 - n$ and hence growing superlinearly with $n$.

\begin{figure}
  \centering
  \includegraphics[scale=0.55]{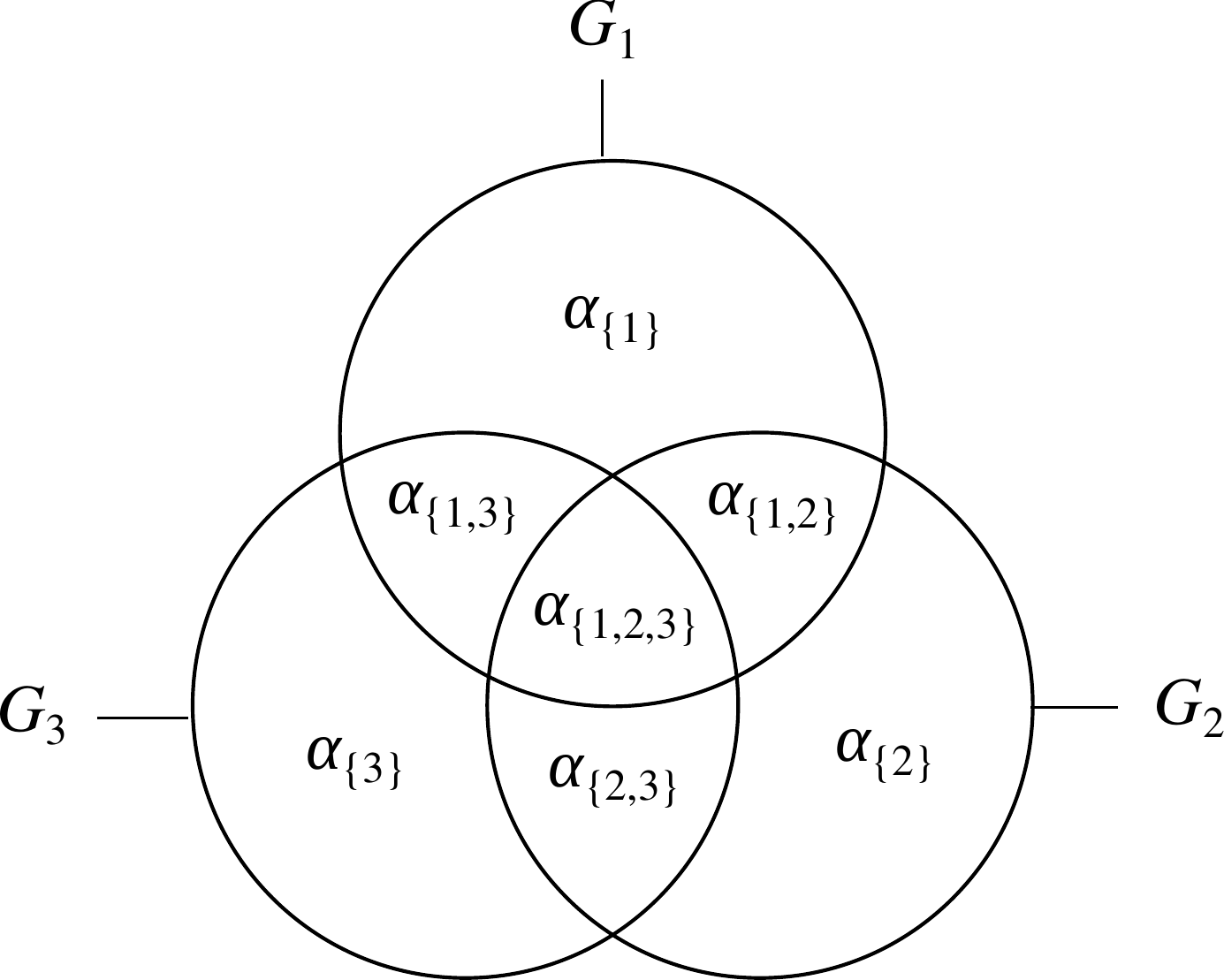}
  \caption{Venn diagram illustrating the general three-grammar case. Here, $\alpha_I$ gives the probability of a sentence parsed by all and only the grammars $G_i$ for which $i \in I$, where $I$ indexes the subsets of $\{1,2,3\}$.}\label{fig:3D-venn}
\end{figure}

In the three-grammar case ($n=3$), the situation is as depicted in Figure \ref{fig:3D-venn}. Each grammar potentially generates sentences which are only parsed by that grammar itself. However, the possibility now arises that two of the three grammars jointly generate something not parsed by the third grammar. Using the symbolism of Figure \ref{fig:3D-venn}, we find that the penalty probability for grammar $G_1$ in this more general three-grammar situation may be expressed as
\begin{equation}
  \begin{split}
    c_1 &= \alpha_{\{2\}}p_2 + \alpha_{\{3\}}p_3 + \alpha_{\{2,3\}}(p_2 + p_3) \\
    &= (\alpha_{\{2\}} + \alpha_{\{2,3\}})p_2 + (\alpha_{\{3\}} + \alpha_{\{2,3\}})p_3.
  \end{split}
\end{equation}
If we now write $a_{12} = \alpha_{\{2\}} + \alpha_{\{2,3\}}$ and $a_{13} = \alpha_{\{3\}} + \alpha_{\{2,3\}}$, we see that $a_{12}$ gives the \emph{relative advantage} of $G_2$ over $G_1$ and $a_{13}$ the relative advantage of $G_3$ over $G_1$. Proceeding analogously to derive the penalty probabilities $c_2$ and $c_3$, one finds
\begin{equation}\label{eq:3D-penalties}
  \left\{
    \begin{aligned}
      c_1 &= a_{12}p_2 + a_{13}p_3 \\
      c_2 &= a_{21}p_1 + a_{23}p_3 \\
      c_3 &= a_{31}p_1 + a_{32}p_2
    \end{aligned}
  \right.
\end{equation}
where each $a_{ij}$ thus gives the probability of a sentence which is parsed by $G_j$ but not by $G_i$. It is these relative advantages $a_{ij}$ that determine the system's dynamics, and consequently it will be useful to collect them in a matrix,
\begin{equation}
  \vec A = [a_{ij}] = \begin{bmatrix}
    0 & a_{12} & a_{13} \\
    a_{21} & 0 & a_{23} \\
    a_{31} & a_{32} & 0
  \end{bmatrix}
\end{equation}
where the diagonal is zero since obviously $a_{ii} = 0$ for any $i$. In what follows, I will refer to such a matrix as an \emph{advantage matrix}. It is possible, with greater technical difficulty, to generalize this procedure for arbitrary $n$, and many of the results to follow carry over to the general case. Here, I restrict my attention to three grammars in the interest of readability.

Not every square matrix of real numbers is a valid advantage matrix. As already mentioned, the diagonal is necessarily zero, since no grammar both parses and does not parse one and the same sentence. Furthermore, from Figure \ref{fig:3D-venn}, we note that the $\alpha$ quantities must all sum to unity, since the event represented by their union is ``a sentence is produced which some grammar parses''. In three dimensions, this corresponds to the requirement
\begin{equation}
  \alpha_{\{1\}} + \alpha_{\{2\}} + \alpha_{\{3\}} + \alpha_{\{1,2\}} + \alpha_{\{1,3\}} + \alpha_{\{2,3\}} + \alpha_{\{1,2,3\}}  = 1.
\end{equation}
Rearranging the terms on the left hand side, we obtain
\begin{equation}\label{eq:cycbal-1}
  a_{21} + a_{32} + a_{13} + \alpha_{\{1,2,3\}}= 1.
\end{equation}
On the other hand, arranging the $\alpha$ terms differently, we have
\begin{equation}\label{eq:cycbal-2}
  a_{31} + a_{23} + a_{12} + \alpha_{\{1,2,3\}} = 1.
\end{equation}
From \eqref{eq:cycbal-1} and \eqref{eq:cycbal-2},
\begin{equation}
  a_{21} + a_{32} + a_{13} = a_{31} + a_{23} + a_{12}
\end{equation}
or
\begin{equation}
  (a_{21} - a_{12}) + (a_{32} - a_{23}) + (a_{13} - a_{31}) = 0.
\end{equation}
Writing $\delta_{ij} = a_{ji} - a_{ij}$, we have
\begin{equation}\label{eq:cyclical-balance-criterion}
  \delta_{12} + \delta_{23} + \delta_{31} = 0
\end{equation}
which I will refer to as the \emph{cyclical balance criterion}. The advantage matrix of any 3-grammar system, then, has to satisfy this criterion.

Within the remit of the cyclical balance criterion, many qualitatively different kinds of advantage matrix are possible. In particular, it is possible for some of the advantage quantities $a_{ij}$ to equal zero -- this will be the case if inclusion (subset--superset) relations exist among the competing grammars, in the sense that one grammar parses everything that another does. In what follows, I will however usually assume that $a_{ij} > 0$ for all $i$ and $j$ with $i\neq j$, and will say that an advantage matrix satisfying this condition is \emph{proper}. Assuming advantage matrices to be proper thus delimits the class of formal systems studied to some extent; the benefit of making this assumption is that it makes available a useful learning-theoretic approximation which is not available in the improper case, as we will shortly see. Without this approximation, the improper cases need to be studied separately, on a case-by-case basis.

\section{Dynamics: general results}\label{sec:dynamics}

With the penalty probabilities \eqref{eq:3D-penalties} in hand, we may now proceed to study the dynamics of the three-grammar case. The general form of the LRP algorithm reads as follows:
\begin{algo}[LRP; \protect\citealp{NarTha1989}: 116--117]\label{alg:LRP-nD}
  \leavevmode\vspace{-\baselineskip}
  \begin{enumerate}
    \item Let $\pi_i = 1/n$ initially.
    \item Present an input token (sentence) $x$ to the learner. This is generated by $G_i$ with probability $p_i$.
    \item Learner picks grammar $G_i$ with probability $\pi_i$.
    \item Suppose learner picked $G_k$.
      \begin{enumerate}[label=\alph*.]
        \item If $G_k$ parses $x$, learner replaces $\pi_k$ with $\pi_k + \gamma (1-\pi_k)$, with learning rate $\gamma$, and $\pi_j$ with $(1-\gamma )\pi_j$, $j\neq k$.
        \item If $G_k$ does not parse $x$, learner replaces $\pi_k$ with $(1-\gamma )\pi_k$ and $\pi_j$ with $\frac{\gamma}{n-1} + (1-\gamma )\pi_j$, $j\neq k$.
      \end{enumerate}
    \item Steps 2--4 are repeated for $T$ input tokens.
  \end{enumerate}
\end{algo}
\noindent Assuming reliable learners (large $T$, small $\gamma$), \citet[117]{NarTha1989} show that the following approximation holds for the learner's hypothesis at the end of the learning cycle:\footnote{If all the penalty probabilities are strictly positive, $c_i > 0$ for all $i$, then this slightly unwieldy formula reduces to the more aesthetic $\hat\pi_i \approx c_i^{-1} / \sum_j c_j^{-1}$ upon division of both the numerator and the denominator by $\prod_i c_i$. \citet{NarTha1989} limit their discussion to this case.}
\begin{equation}\label{eq:nD-LRP-approximation}
  \hat\pi_i \approx \frac{\prod_{j\neq i} c_j}{\sum_j\prod_{k\neq j} c_j}.
\end{equation}
Assuming non-overlapping generations of such learners thus yields the diachronic difference equation
\begin{equation}\label{eq:nD-diff-eq}
  p_i' = \frac{\prod_{j\neq i} c_j}{\sum_j\prod_{k\neq j} c_j}.
\end{equation}
In particular, in three dimensions one has
\begin{equation}\label{eq:3D-diff-eq}
  \left\{
    \begin{aligned}
      p_1' &= \frac{c_2c_3}{c_2c_3 + c_1c_3 + c_1c_2}\\
      p_2' &= \frac{c_1c_3}{c_2c_3 + c_1c_3 + c_1c_2}\\
      p_3' &= \frac{c_1c_2}{c_2c_3 + c_1c_3 + c_1c_2}
    \end{aligned}
  \right.
\end{equation}
where, it bears stressing, each penalty $c_i$ is itself a function of the system state $\vec p = (p_1, p_2, p_3)$, leading to a nonlinear equation. For this to be well-defined, mathematically speaking, we need to check that the denominators never equal zero. This is guaranteed for all proper advantage matrices:
\begin{theorem}\label{thm:zero-penalties}
  For a proper advantage matrix, $c_i = 0$ if and only if $p_i = 1$.
\end{theorem}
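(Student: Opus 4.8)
The plan is to prove the equivalence for a single fixed index and then invoke symmetry, since the penalty probabilities in \eqref{eq:3D-penalties} all share the uniform shape $c_i = \sum_{j\neq i} a_{ij}p_j$; whatever argument works for one index works verbatim for the others after relabeling the grammars. Throughout I would lean on the two structural facts that $\vec p$ satisfies as a probability distribution: the simplex constraint $p_1 + p_2 + p_3 = 1$, and coordinatewise nonnegativity $p_j \geq 0$. Fix $i = 1$, so that $c_1 = a_{12}p_2 + a_{13}p_3$.

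For the ``if'' direction I would argue as follows. Suppose $p_1 = 1$. Then the simplex constraint together with nonnegativity forces $p_2 = p_3 = 0$, and substituting into $c_1 = a_{12}p_2 + a_{13}p_3$ gives $c_1 = 0$ at once. Notably this direction uses neither the sign of the advantages nor properness, so it holds for any advantage matrix.

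For the converse I would suppose $c_1 = 0$, i.e.\ $a_{12}p_2 + a_{13}p_3 = 0$. This is the step where properness is essential: because the matrix is proper, $a_{12} > 0$ and $a_{13} > 0$, and since $p_2, p_3 \geq 0$ each summand $a_{12}p_2$ and $a_{13}p_3$ is itself nonnegative. A sum of nonnegative reals can vanish only if every term vanishes, so $a_{12}p_2 = a_{13}p_3 = 0$; dividing through by the strictly positive advantages yields $p_2 = p_3 = 0$, whence the simplex constraint gives $p_1 = 1 - p_2 - p_3 = 1$, as required.

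The argument is elementary, so I do not expect a genuine obstacle; the one point that repays attention is the \emph{indispensability} of properness in the converse. Were some advantage allowed to vanish -- say $a_{12} = 0$ -- then $c_1 = a_{13}p_3$ could be zero at a state with $p_3 = 0$ but $p_2 > 0$, giving $c_1 = 0$ while $p_1 < 1$, and the equivalence would break. This is exactly why the text flags the improper cases as needing separate, case-by-case treatment.
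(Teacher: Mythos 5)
Your proof is correct and follows the same route as the paper's: both hinge on the observation that a sum of nonnegative terms with strictly positive coefficients (properness) vanishes exactly when every $p_j$, $j \neq i$, vanishes, which by the simplex constraint is equivalent to $p_i = 1$. Your version merely unpacks the two directions explicitly and adds the (accurate) remark that properness is needed only for the converse, which the paper leaves implicit.
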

\begin{proof}
  Since $\vec A$ is proper, $c_i = \sum_{j\neq i} a_{ij} p_j = 0$ if and only if $p_j = 0$ for all $j\neq i$. But since $\vec p$ is a probability distribution, the latter occurs if and only if $p_i = 1$.
\end{proof}
\begin{corollary}
  Given a proper advantage matrix, it is never possible for two penalty probabilities $c_i$ and $c_j$, $i\neq j$, to equal zero at the same time. Consequently, the denominators in \eqref{eq:3D-diff-eq} are never zero.\qed
\end{corollary}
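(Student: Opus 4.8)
The plan is to derive both assertions directly from Theorem~\ref{thm:zero-penalties}, which identifies the vanishing of a penalty probability with a pure (non-mixed) population state. For the first claim I would argue by contradiction: suppose $c_i = 0$ and $c_j = 0$ for some $i \neq j$. Theorem~\ref{thm:zero-penalties} then forces $p_i = 1$ and simultaneously $p_j = 1$. But the $p_k$ constitute a probability distribution, so $p_i + p_j \leq \sum_k p_k = 1$, which is incompatible with $p_i = p_j = 1$. Hence no two penalty probabilities can vanish at once.

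For the second claim I would first record the elementary observation that every penalty probability is non-negative, since $c_i = \sum_{j\neq i} a_{ij} p_j$ is a sum of products of the non-negative entries $a_{ij}$ of a proper advantage matrix with the non-negative probabilities $p_j$. The denominator in \eqref{eq:3D-diff-eq}, namely $c_1 c_2 + c_1 c_3 + c_2 c_3$, is therefore a sum of three non-negative terms. I would then note that at least two of the three penalty probabilities must be strictly positive: by the first claim at most one of them is zero, so among $c_1, c_2, c_3$ there is a pair---say $c_i, c_j$---both of which are positive. Their product $c_i c_j$ is a strictly positive summand of the denominator, and since the remaining two summands are non-negative, the whole denominator is bounded below by $c_i c_j > 0$. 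Thus the denominator is never zero.

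I do not anticipate a genuine obstacle here, as the heavy lifting is already done by Theorem~\ref{thm:zero-penalties}; the corollary is essentially a bookkeeping exercise combining the non-negativity of the $c_i$ with the defining constraint $\sum_k p_k = 1$ on probability vectors. The only point worth stating explicitly is this non-negativity of the penalty probabilities, which is what licenses the inference from ``the sum of pairwise products vanishes'' to ``at least two of the factors vanish''---a step that would fail were the $c_i$ permitted to take negative values.
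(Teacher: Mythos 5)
Your proposal is correct and matches the argument the paper intends: the corollary is stated with an immediate \qed precisely because it follows from Theorem~\ref{thm:zero-penalties} plus the constraint $\sum_k p_k = 1$, which is exactly your first paragraph, and your second paragraph (non-negativity of the $c_i$ forcing the sum of pairwise products to be positive when at most one penalty vanishes) is the intended bookkeeping the paper leaves implicit. Nothing is missing; your explicit note that non-negativity is what licenses the final inference is a sensible addition rather than a deviation.
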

\noindent The learning-theoretic approximation \eqref{eq:nD-LRP-approximation} therefore leads to a well-defined inter-generational (diachronic) dynamical system whenever advantages are proper (as pointed out in the preceding discussion, the improper cases would need to be studied separately, a task which I set aside in the present paper).

As the $p_i$ are probabilities, the system \eqref{eq:3D-diff-eq} is defined on the $3$-dimensional \emph{simplex}
\begin{equation}
  S_3 = \{\vec p = (p_1, p_2, p_3) : 0 \leq p_1, p_2, p_3 \leq 1\ \textnormal{and}\ p_1 + p_2 + p_3 = 1\}.
\end{equation}
This set may be partitioned into the \emph{interior}
\begin{equation}
  \intr S_3 = \{\vec p \in S_3 : 0 < p_1, p_2, p_3 < 1\}
\end{equation}
and the \emph{boundary}
\begin{equation}
  \bd S_3 = \{\vec p \in S_3 : p_i = 0\ \textnormal{for some}\ i\}.
\end{equation}
Of special interest are the three points $\vec v_1 = (1,0,0)$, $\vec v_2 = (0,1,0)$ and $\vec v_3 = (0,0,1)$, corresponding to a state of dominance by one of the three grammars; these points are the \emph{vertices} of the simplex. In what follows, I will illustrate the behaviour of three-dimensional systems with the help of a barycentric triangular plot in which the vertices of the triangle correspond to the vertices of the simplex, the triangle's centroid corresponding to the mixed state $\vec p = (1/3, 1/3, 1/3)$ (Figure \ref{fig:simplex}).

\begin{figure}
  \centering
  \includegraphics[scale=0.85,clip,trim=0cm 1cm 0cm 1cm]{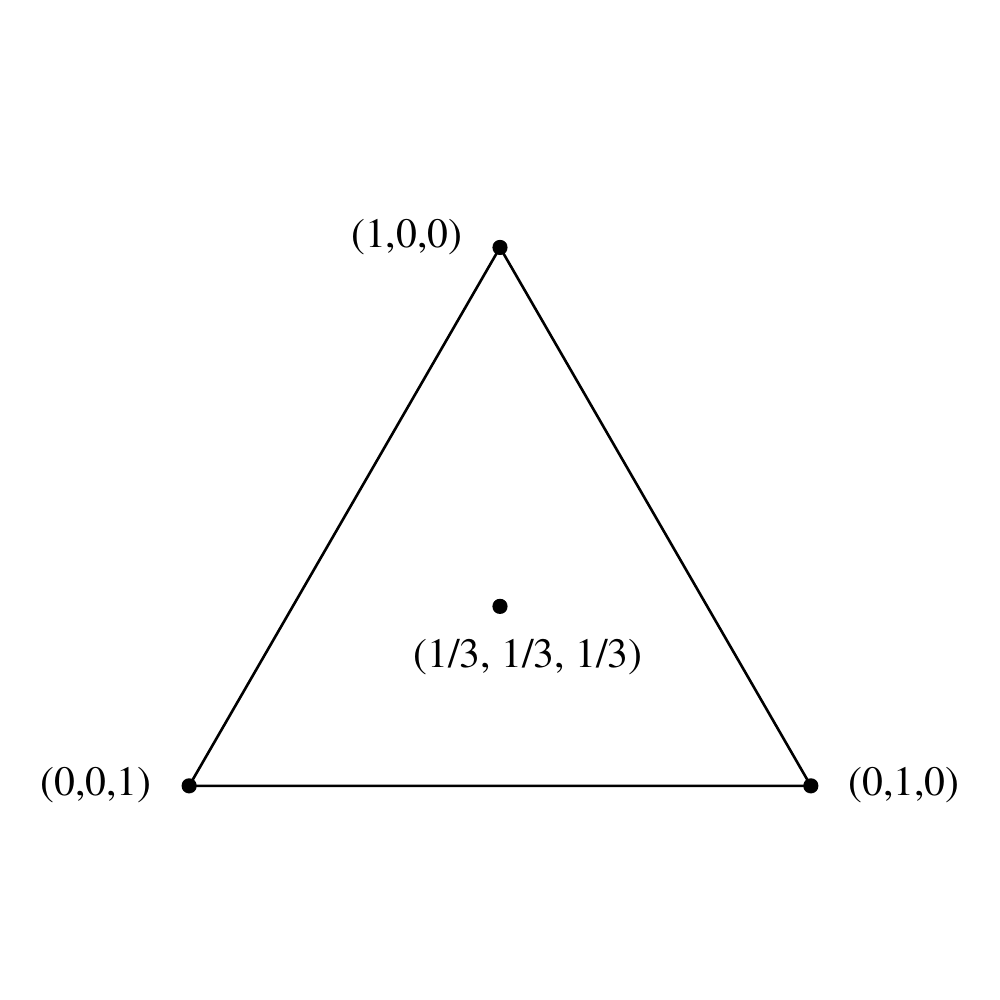}
  \caption{The state $\vec p = (p_1, p_2, p_3)$ of a $3$-grammar system is defined on the $3$-dimensional simplex $S_3$, which is best illustrated using a barycentric ternary plot. Shown here are the three vertices $\vec v_1 = (1,0,0)$, $\vec v_2 = (0,1,0)$ and $\vec v_3 = (0,0,1)$ as well as the barycentre $(1/3, 1/3, 1/3)$.}\label{fig:simplex}
\end{figure}

In the general case, the system \eqref{eq:3D-diff-eq} is too complicated to be solved analytically. In other words we do not have, for an arbitrary advantage matrix $\vec A$, a closed-form equation that would tell us the exact time evolution of the system from any given initial state. We can, however, arrive at an understanding of the system's dynamics by finding its rest points and studying their stability. A first result is that each of the three vertices $\vec v_i$ is a rest point and that no further rest points exist on the boundary $\bd S_3$, whenever $\vec A$ is proper:
\begin{theorem}
  The points $\vec v_1 = (1,0,0)$, $\vec v_2 = (0,1,0)$ and $\vec v_3 = (0,0,1)$ are rest points of \eqref{eq:3D-diff-eq} for any proper advantage matrix. No other point in $\bd S_3$ is a rest point.
\end{theorem}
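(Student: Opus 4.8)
The plan is to establish both claims by direct evaluation of the difference equations \eqref{eq:3D-diff-eq}, using Theorem \ref{thm:zero-penalties} throughout to control exactly which penalty probabilities vanish. First I would verify that each vertex is a rest point; then I would show that no boundary point with a single vanishing coordinate can be one. This exhausts the non-vertex boundary, since a point of $\bd S_3$ with two zero coordinates is forced (by $p_1 + p_2 + p_3 = 1$) to be a vertex.

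For the vertices, it suffices to treat $\vec v_1 = (1,0,0)$; the cases of $\vec v_2$ and $\vec v_3$ then follow by relabelling indices. At $\vec v_1$ I would read off from \eqref{eq:3D-penalties} that $c_2 = a_{21}$ and $c_3 = a_{31}$, both strictly positive by properness, while $c_1 = 0$ by Theorem \ref{thm:zero-penalties}. Substituting into \eqref{eq:3D-diff-eq}, every summand in the denominator carrying a factor of $c_1$ drops out, leaving denominator $c_2 c_3$; hence $p_1' = c_2 c_3 / (c_2 c_3) = 1$, while $p_2'$ and $p_3'$ each have a numerator divisible by $c_1 = 0$ over the nonzero denominator $c_2 c_3$, so $p_2' = p_3' = 0$. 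Thus $\vec v_1' = \vec v_1$.

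For the second claim, let $\vec p \in \bd S_3$ be any non-vertex boundary point. Because the coordinates sum to one, $\vec p$ has exactly one zero coordinate, say $p_k = 0$, with the other two strictly positive; in particular no coordinate equals $1$. By Theorem \ref{thm:zero-penalties}, $c_i = 0$ would force $p_i = 1$, so all three penalties are strictly positive at $\vec p$. Reading off the $p_k'$ line of \eqref{eq:3D-diff-eq}, its numerator $\prod_{j\neq k} c_j$ is a product of strictly positive factors and is therefore strictly positive, while the denominator is finite and nonzero; hence $p_k' > 0 = p_k$, so $\vec p$ cannot be a rest point.

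I do not expect a serious obstacle: once the location of the vanishing penalties is pinned down, the argument is a short finite case check. The only genuinely delicate point is that the difference equations be well-defined at the vertices — that numerator and denominator do not vanish together — but this is exactly what Theorem \ref{thm:zero-penalties} together with properness already supplies. That same theorem does double duty in the second part, certifying that all penalties are strictly positive away from the vertices, which is precisely what forces the formerly-zero coordinate to increase and thereby rules out any further boundary rest point.
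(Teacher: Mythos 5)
Your proposal is correct and follows essentially the same route as the paper's own proof: verify the vertices by direct inspection of \eqref{eq:3D-diff-eq} using Theorem \ref{thm:zero-penalties}, then rule out non-vertex boundary points by observing that $p_k = 0$ together with the strict positivity of the penalties (again via Theorem \ref{thm:zero-penalties} and properness) forces $p_k' > p_k$. The only difference is presentational --- the paper argues from ``boundary rest point'' to ``must be a vertex'' while you argue the contrapositive --- and your vertex computation is somewhat more explicit than the paper's one-line inspection.
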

\begin{proof}
  Using Theorem \ref{thm:zero-penalties}, inspection of \eqref{eq:3D-diff-eq} immediately shows that $\vec{v}_i' - \vec v_i = \vec 0$, i.e.~that each vertex $\vec v_i$ is a rest point.

  Now suppose that $\vec p = (p_1, p_2, 0)$ is a rest point. Then $p_3' - p_3 = 0$, which by \eqref{eq:3D-diff-eq} implies that $c_1c_2 = 0$, which implies that either $c_1 = 0$ or $c_2 = 0$. From Theorem \ref{thm:zero-penalties}, $p_1 = 1$ in the first case and $p_2 = 1$ in the second. Due to the symmetry of \eqref{eq:3D-diff-eq}, the same argument holds for states of the form $(p_1, 0, p_3)$ and $(0, p_2, p_3)$. Thus, if $\vec p \in \bd S_3$ is a rest point, it is necessarily a vertex.
\end{proof}
\noindent If an interior rest point exists, it satisfies a stability condition:
\begin{theorem}\label{thm:rest-point-condition}
  Let $\vec p = (p_1, p_2, p_3) \in \intr S_3$. Then $\vec p$ is a rest point if and only if $c_1p_1 = c_2p_2 = c_3p_3$.
\end{theorem}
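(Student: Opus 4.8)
The plan is to argue directly from the explicit form of the dynamics in \eqref{eq:3D-diff-eq}, exploiting the symmetry of the three numerators there. Writing $D = c_2c_3 + c_1c_3 + c_1c_2$ for the common denominator, a point $\vec p \in \intr S_3$ is a rest point precisely when $p_1 = c_2c_3/D$, $p_2 = c_1c_3/D$ and $p_3 = c_1c_2/D$ hold simultaneously. Before manipulating anything, I would record that on the interior all three penalties are strictly positive: by Theorem \ref{thm:zero-penalties}, $c_i = 0$ forces $p_i = 1$, which is excluded in $\intr S_3$. This is what makes every division below legitimate and guarantees $D > 0$.

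For the forward direction ($\Rightarrow$), I would take the three rest-point equations and multiply the $i$th one through by $c_i$. This turns the numerator $\prod_{j\neq i} c_j$ into the fully symmetric product $c_1c_2c_3$, so that
\begin{equation*}
  c_1 p_1 = c_2 p_2 = c_3 p_3 = \frac{c_1c_2c_3}{D}.
\end{equation*}
The single observation that multiplying a numerator $\prod_{j\neq i} c_j$ by its missing factor $c_i$ yields the same quantity $c_1c_2c_3/D$ for every $i$ is really the crux of the whole equivalence.

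For the converse ($\Leftarrow$), I would set $\kappa = c_1 p_1 = c_2 p_2 = c_3 p_3$ and solve for the coordinates: since each $c_i > 0$, we may write $p_i = \kappa/c_i$. Summing and using $p_1 + p_2 + p_3 = 1$ then pins down the constant,
\begin{equation*}
  \kappa\left(\frac{1}{c_1} + \frac{1}{c_2} + \frac{1}{c_3}\right) = 1 \quad\Longrightarrow\quad \kappa = \frac{c_1c_2c_3}{D}.
\end{equation*}
Substituting back gives $p_i = \kappa/c_i = \prod_{j\neq i} c_j / D$, which is exactly the expression for $p_i'$ in \eqref{eq:3D-diff-eq}; hence $\vec p$ is a rest point.

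I do not anticipate a serious obstacle: both implications reduce to elementary algebra once the symmetry of the numerators is noticed. The one point that genuinely requires care — and the only place the hypotheses $\vec p \in \intr S_3$ and properness of $\vec A$ are used — is the positivity of the penalties, which justifies dividing by each $c_i$ and ensures that the normalising constant $\kappa$ and the denominator $D$ are well defined and nonzero.
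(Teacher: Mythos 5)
Your proposal is correct and follows essentially the same route as the paper's own proof: both rest on Theorem \ref{thm:zero-penalties} to secure $c_i > 0$ on the interior, and both reduce the rest-point condition to the algebraic observation that $c_i p_i$ must equal a quantity independent of $i$ (the paper first rewrites \eqref{eq:3D-diff-eq} as $p_i' = c_i^{-1}/\sum_j c_j^{-1}$, whereas you multiply the unreduced equations through by $c_i$ -- the same computation organized differently). If anything, your converse is slightly more careful than the paper's wording, since you make explicit that the simplex constraint $p_1 + p_2 + p_3 = 1$ is what pins down the common constant $\kappa$.
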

\begin{proof}
  Since $\vec p\in \intr S_3$, Theorem \ref{thm:zero-penalties} implies that $c_i > 0$ for all $i$. Division by the $c_i$ is then possible, and \eqref{eq:nD-diff-eq} reduces, with algebra, to
  \[
    p_i' = \frac{c_i^{-1}}{\sum_j c_j^{-1}} = \frac{1}{c_i\sum_j c_j^{-1}}.
  \]
Now $p_i' - p_i = 0$ if and only if
  \[
    c_ip_i = \frac{1}{\sum_j c_j^{-1}}.
  \]
This holds for all $i$ and the right hand side is independent of $i$. Hence, the previous is equivalent to $c_1p_1 = c_2p_2 = c_3p_3$.
\end{proof}

Apart from these simple observations, it is difficult to obtain further results concerning the behaviour of \eqref{eq:3D-diff-eq} in the general case. I will next turn to a consideration of a number of special cases which are considerably easier to analyse, in increasing order of complexity, so as to arrive at a general picture of the diachronic behaviour of multiple-grammar systems based on LRP learning.

\section{Babelian systems}\label{sec:babelian}

Arguably the simplest case occurs when all of the pairwise advantages $a_{ij}$ are equal -- in this case, no single grammar has a net benefit over the rest. Formally, I will say that a system is \emph{Babelian} if its advantage matrix satisfies the following: there is an $a > 0$ such that $a_{ij} = a$ for all $i,j$ with $i \neq j$. In three dimensions, this amounts to matrices of the form
\begin{equation}
  \vec A = \begin{bmatrix}
    0 & a & a \\
    a & 0 & a \\
    a & a & 0
  \end{bmatrix}.
\end{equation}
Notice that such matrices satisfy the cyclical balance criterion \eqref{eq:cyclical-balance-criterion} and are thus valid advantage matrices.

Any Babelian $3$-grammar system turns out to have one interior rest point, namely the maximum entropy state $(1/3, 1/3, 1/3)$:
\begin{theorem}\label{thm:babelian-interior-rest-point}
  For any Babelian $3$-grammar system, the state $(1/3, 1/3, 1/3)$ is the only interior rest point.
\end{theorem}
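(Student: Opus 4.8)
The plan is to reduce the problem to the rest-point characterization already established and then exploit a symmetry in the resulting equations. First I would substitute the Babelian form $a_{ij} = a$ into the penalty probabilities \eqref{eq:3D-penalties}. Because $p_1 + p_2 + p_3 = 1$, each penalty collapses: for $\{i,j,k\} = \{1,2,3\}$ we get $c_i = a(p_j + p_k) = a(1 - p_i)$. This already exhibits the three penalties as completely symmetric functions of the coordinates, each depending only on its ``own'' probability.

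Next I would invoke Theorem~\ref{thm:rest-point-condition}, which says an interior point $\vec p$ is a rest point if and only if $c_1 p_1 = c_2 p_2 = c_3 p_3$. Substituting $c_i = a(1 - p_i)$ and cancelling the common factor $a > 0$, this condition becomes $g(p_1) = g(p_2) = g(p_3)$, where $g(x) = x(1-x)$. The entire question thus reduces to identifying which interior points of $S_3$ have all three coordinates lying on a single level set of $g$.

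The key observation is that $g$ is a downward parabola symmetric about $x = 1/2$, so each of its level sets on $(0,1)$ contains at most two points, related by the reflection $x \mapsto 1 - x$. Hence each $p_i$ belongs to a common set of at most two values, and by the pigeonhole principle at least two of the coordinates coincide. Taking $p_1 = p_2$ without loss of generality, the simplex constraint gives $p_3 = 1 - 2p_1$, and the remaining requirement $g(p_1) = g(p_3)$ leaves exactly two possibilities: either $p_3 = p_1$, which forces $1 - 2p_1 = p_1$ and hence $p_1 = p_2 = p_3 = 1/3$; or $p_3 = 1 - p_1$, which combined with $p_3 = 1 - 2p_1$ forces $p_1 = 0$ and is excluded in the interior.

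The only genuinely delicate step is this final case split, where the reflection symmetry $g(x) = g(1 - x)$ manufactures a spurious ``mirror'' solution that has to be eliminated using the constraint $p_1 + p_2 + p_3 = 1$; everything else is routine algebra. A one-line verification that $(1/3, 1/3, 1/3)$ indeed satisfies $c_1 p_1 = c_2 p_2 = c_3 p_3$ (here $c_i = 2a/3$ and $c_i p_i = 2a/9$ for every $i$) confirms that it is a rest point, so it is the unique interior one, completing the proof.
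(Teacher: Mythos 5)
Your proof is correct, and it takes a genuinely different route from the paper's. The paper attacks the difference equation directly: it substitutes the Babelian advantages into $p_i' - p_i = \frac{c_i^{-1}}{\sum_j c_j^{-1}} - p_i$ and simplifies this to $p_i' - p_i = \frac{1}{3} - p_i$, from which uniqueness of the interior rest point is immediate. Note, however, that the paper's printed simplification replaces \emph{every} $a_{ik}$ by $a$, including the diagonal term $a_{ii} = 0$; carried out with the diagonal kept at zero, the penalties are $c_i = a(1-p_i)$, so the dynamics is $p_i' = \frac{(1-p_i)^{-1}}{\sum_j (1-p_j)^{-1}}$, which is \emph{not} identically $1/3$ away from the barycentre (e.g.\ at $(0.5, 0.3, 0.2)$ one gets $p_1' \approx 0.43$). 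The rest-point condition that the correct reduction actually yields is $p_1(1-p_1) = p_2(1-p_2) = p_3(1-p_3)$ --- exactly the condition $g(p_1) = g(p_2) = g(p_3)$ that your proof reaches via Theorem~\ref{thm:rest-point-condition}. Your pigeonhole-plus-reflection argument (a level set of $g(x) = x(1-x)$ on $(0,1)$ contains at most two points, related by $x \mapsto 1-x$, and the mirror solution is killed by the constraint $p_1 + p_2 + p_3 = 1$) is precisely the step needed to extract uniqueness from that condition. So your argument is not merely a valid alternative: it supplies, in airtight form, the uniqueness reasoning that the paper's own computation short-circuits, at the modest cost of invoking the rest-point characterization as a lemma rather than working from the evolution equation alone.
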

\begin{proof}
  That $(1/3, 1/3, 1/3)$ is a rest point would be easy to establish using Theorem \ref{thm:rest-point-condition}. To prove the stronger result that it is the only interior rest point of a Babelian system, let us look at the difference equation \eqref{eq:3D-diff-eq} directly. In the interior $\intr S_3$, one has (cf.~proof of Theorem \ref{thm:rest-point-condition})
  \[
    \begin{split}
      p_i' - p_i &= \frac{c_i^{-1}}{\sum_j c_j^{-1}} - p_i \\
      &= \frac{(\sum_k a_{ik}p_k)^{-1}}{\sum_j(\sum_k a_{jk} p_k)^{-1}} - p_i \\
      &= \frac{(\sum_k a p_k)^{-1}}{\sum_j (\sum_k a p_k)^{-1}} - p_i \\
      &= \frac{a^{-1} (\sum_k p_k)^{-1}}{a^{-1} \sum_j (\sum_k p_k)^{-1}} - p_i 
    \end{split}
  \]
  for a Babelian system. But $\sum_k p_k = 1$, so the above is equivalent to
  \[
      p_i' - p_i
      = \frac{a^{-1}}{3a^{-1}} - p_i 
      = \frac{1}{3} - p_i
  \]
in three dimensions. Hence $p_i' - p_i = 0$ if and only if $p_i = 1/3$, and consequently $(1/3, 1/3, 1/3)$ is the only interior rest point.
\end{proof}

Thus any Babelian three-grammar system has four rest points: the three vertices, corresponding to total dominance by one of the three grammars, and the maximum entropy state in which each grammar has equal representation. It remains to figure out the stability of these rest points. In general, stability analysis hinges on studying how the state of the dynamical system under consideration changes in the immediate vicinity of the rest point in question -- whether nearby points in the system's state space are attracted to the rest point or repelled by it (cf.~our discussion of the three pendula in Section \ref{sec:introduction}). Mathematically, we need to study the partial derivatives of the system's evolution equations when evaluated at the rest point. For a three-dimensional system, the \emph{Jacobian matrix} is defined as the matrix of partial derivatives
\begin{equation}
    \renewcommand{\arraystretch}{1.5}
  \vec J = \begin{bmatrix}
    \frac{\partial f_1}{\partial p_1} & \frac{\partial f_1}{\partial p_2} & \frac{\partial f_1}{\partial p_3} \\
    \frac{\partial f_2}{\partial p_1} & \frac{\partial f_2}{\partial p_2} & \frac{\partial f_2}{\partial p_3} \\
    \frac{\partial f_3}{\partial p_1} & \frac{\partial f_3}{\partial p_2} & \frac{\partial f_3}{\partial p_3}
  \end{bmatrix}
\end{equation}
where the functions $f_i$ are as in \eqref{eq:gen-diff-eq}. When the partial derivatives $\partial f_i/\partial p_j$ are evaluated at a rest point $\vec p = (p_1, p_2, p_3)$, the Jacobian reduces to a matrix of real numbers; denote this by $\vec J (\vec p)$. It can then be shown that, for a discrete-time system, (1) if the modulus of each eigenvalue of $\vec J(\vec p)$ is strictly less than $1$, the rest point $\vec p$ is asymptotically stable, and (2) if the modulus of at least one eigenvalue is strictly greater than $1$, $\vec p$ is unstable \citep[70--71]{Dra1992}. While this method is foolproof in the sense that it is purely a matter of mechanical calculation, computing the eigenvalues is in most cases extremely tedious and is best left to a computer. In what follows, I shall consequently only report the end results of these computations, suppressing the gritty details.

Applying the Jacobian method on \eqref{eq:3D-diff-eq} gives us our main result on the stability of Babelian systems.
\begin{theorem}
  In a three-dimensional Babelian system, the interior rest point $(1/3, 1/3, 1/3)$ is asymptotically stable. The vertex rest points $(1,0,0)$, $(0,1,0)$ and $(0,0,1)$ are all unstable.
\end{theorem}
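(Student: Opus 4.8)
The plan is to verify stability at each rest point via the Jacobian criterion quoted from \citet{Dra1992}, after two simplifications. First, every penalty in a Babelian system carries the common factor $a$ --- indeed $c_i = a(1-p_i)$ once $p_1+p_2+p_3=1$ is used --- and this factor cancels in the ratios defining \eqref{eq:3D-diff-eq}, so the dynamics are independent of $a$ and I may set $a=1$. Second, the Babelian advantage matrix is invariant under every permutation of the three grammars, so the induced map on $S_3$ commutes with the permutation action on coordinates; it therefore suffices to treat a single vertex, say $\vec{v}_1 = (1,0,0)$, and to transfer the conclusion to $\vec{v}_2,\vec{v}_3$ by symmetry, while at the fully symmetric point $(1/3,1/3,1/3)$ the linearization is forced to carry a repeated tangent eigenvalue. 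Throughout I would keep in mind that the dynamics live on the two-dimensional simplex: either I pass to the chart $p_3 = 1 - p_1 - p_2$ and read off the two eigenvalues of the $2\times2$ Jacobian of $(f_1,f_2)$, or I keep the $3\times3$ Jacobian and discard the spurious eigenvalue along the transverse direction $(1,1,1)$; only the two tangent eigenvalues decide stability.

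For $(1/3,1/3,1/3)$ all penalties are positive, so I would use the reciprocal form $f_i = c_i^{-1}/\sum_j c_j^{-1}$ from the proof of Theorem \ref{thm:rest-point-condition}, with $c_i = 1-p_i$. Differentiating and evaluating where every $c_i = 2/3$, one finds the $3\times3$ Jacobian $\tfrac12\vec{I} - \tfrac16\vec{1}\vec{1}^{\mathsf{T}}$, whose eigenvalues are $0$ along the transverse direction and $\tfrac12,\tfrac12$ on the simplex (equivalently, the $2\times2$ reduction is simply $\tfrac12\vec{I}$). Both simplex eigenvalues have modulus $\tfrac12<1$, so the interior rest point is asymptotically stable; the coincidence of the two values is exactly what the symmetry argument predicts and serves as a check.

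At $\vec{v}_1=(1,0,0)$ the reciprocal form is unavailable because $c_1=0$, so I would work directly with the product form, writing $f_1 = c_2c_3/\Delta$ and $f_2 = c_1c_3/\Delta$ with $\Delta = c_2c_3+c_1c_3+c_1c_2$ and, in the chart, $c_1 = 1-p_1$, $c_2 = 1-p_2$, $c_3 = p_1+p_2$. Differentiating these quotients and evaluating at $(p_1,p_2)=(1,0)$, where $\Delta = 1$, yields the lower-triangular Jacobian $\bigl[\begin{smallmatrix} 2 & 0 \\ -1 & 0\end{smallmatrix}\bigr]$ with eigenvalues $2$ and $0$. Since $2$ exceeds $1$ in modulus, $\vec{v}_1$ is unstable, and by the permutation symmetry so are $\vec{v}_2$ and $\vec{v}_3$.

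The step I expect to be the main obstacle is the bookkeeping at the vertices. Two hazards combine: the reciprocal simplification fails silently once a penalty vanishes, forcing a return to the less convenient product form; and the vertex sits at a corner of the coordinate domain, where one must confirm that the map extends smoothly and that the computed $2\times2$ matrix genuinely represents the differential of the simplex map rather than an artefact of the constraint. Keeping the constraint explicit and isolating the transverse direction is what makes the eigenvalues emerge cleanly.
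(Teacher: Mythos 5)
Your proposal is correct and follows essentially the same route as the paper: the paper's proof simply applies the Jacobian eigenvalue criterion and reports eigenvalues $0$ and $2$ at the vertices and $0$ and $1/2$ at the interior rest point, which your explicit computations (the chart Jacobian $\bigl[\begin{smallmatrix}2 & 0\\ -1 & 0\end{smallmatrix}\bigr]$ at a vertex, and $\tfrac12\vec{I}-\tfrac16\vec{1}\vec{1}^{\mathsf{T}}$, i.e.\ $\tfrac12\vec{I}$ on the simplex, at the centre) reproduce exactly. Your additional care about the simplex constraint, the cancellation of $a$, and the permutation symmetry are details the paper suppresses but which are consistent with, and justify, its reported eigenvalues.
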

\begin{proof}
  The Jacobian has eigenvalues $0$ and $2$ at each of the three vertices, and eigenvalues $0$ and $1/2$ at the interior rest point $(1/3, 1/3, 1/3)$.
\end{proof}
\noindent Thus, as expected, the natural tendency in a Babelian system is away from dominance and towards the maximally mixed state $(1/3, 1/3, 1/3)$ in which each grammar is used with probability $1/3$ (Figures \ref{fig:babelian-ternary}--\ref{fig:babelian-timedim}). This shows that three-grammar Babelian systems have ``built-in'' stable variation, in stark contrast to the two-grammar case (Section \ref{sec:2D}).

\begin{figure}
  \centering
  \includegraphics[scale=0.85,clip,trim=0cm 1cm 0cm 1cm]{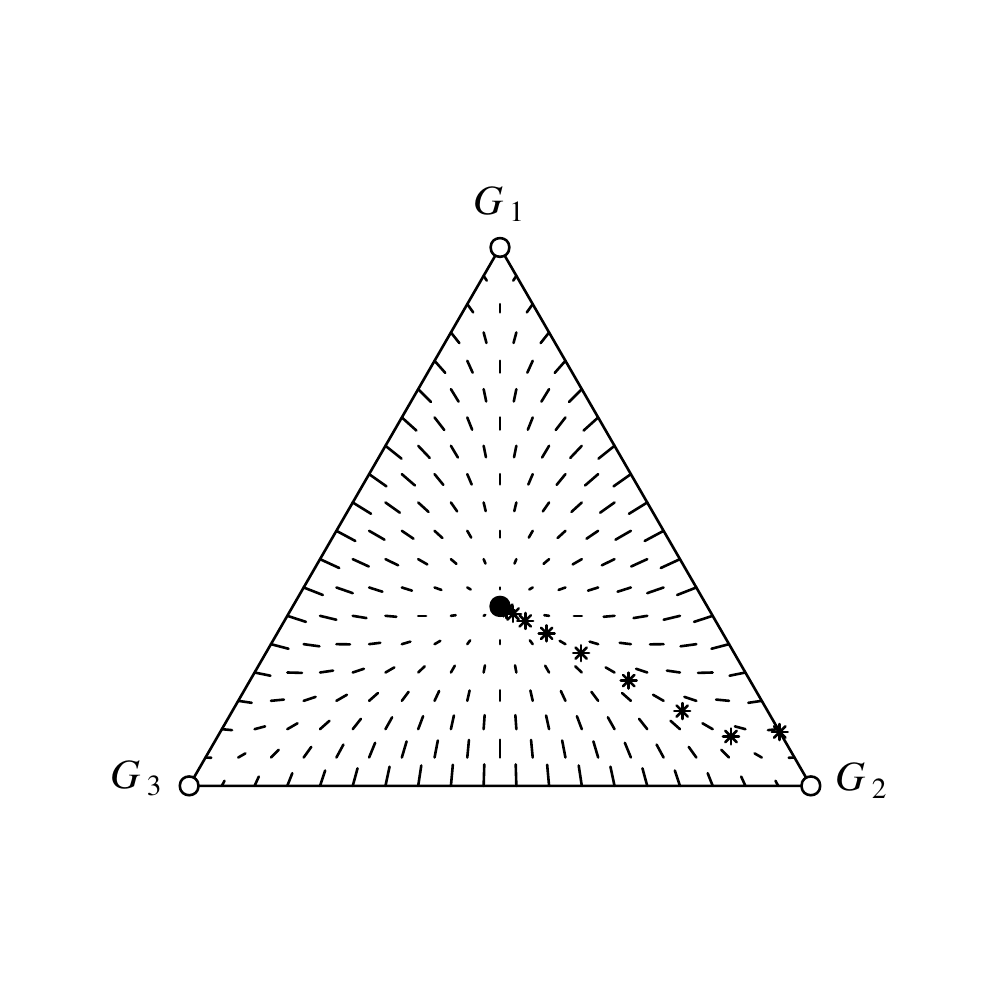}
  \caption{Phase space plot of Babelian $3$-grammar systems. The three unstable vertex rest points are shown as open circles and the stable interior rest point as a filled circle, as is customary; the line segments give the magnitude and direction of change at various points in the state space. The series of asterisks illustrates one diachronic (inter-generational) trajectory from the initial state $\vec p = (0.1, 0.9, 0.0)$; see Figure \ref{fig:babelian-timedim} for a conventional representation of this trajectory in the time dimension.}\label{fig:babelian-ternary}
\end{figure}

\begin{figure}
  \centering
  \includegraphics[scale=0.85]{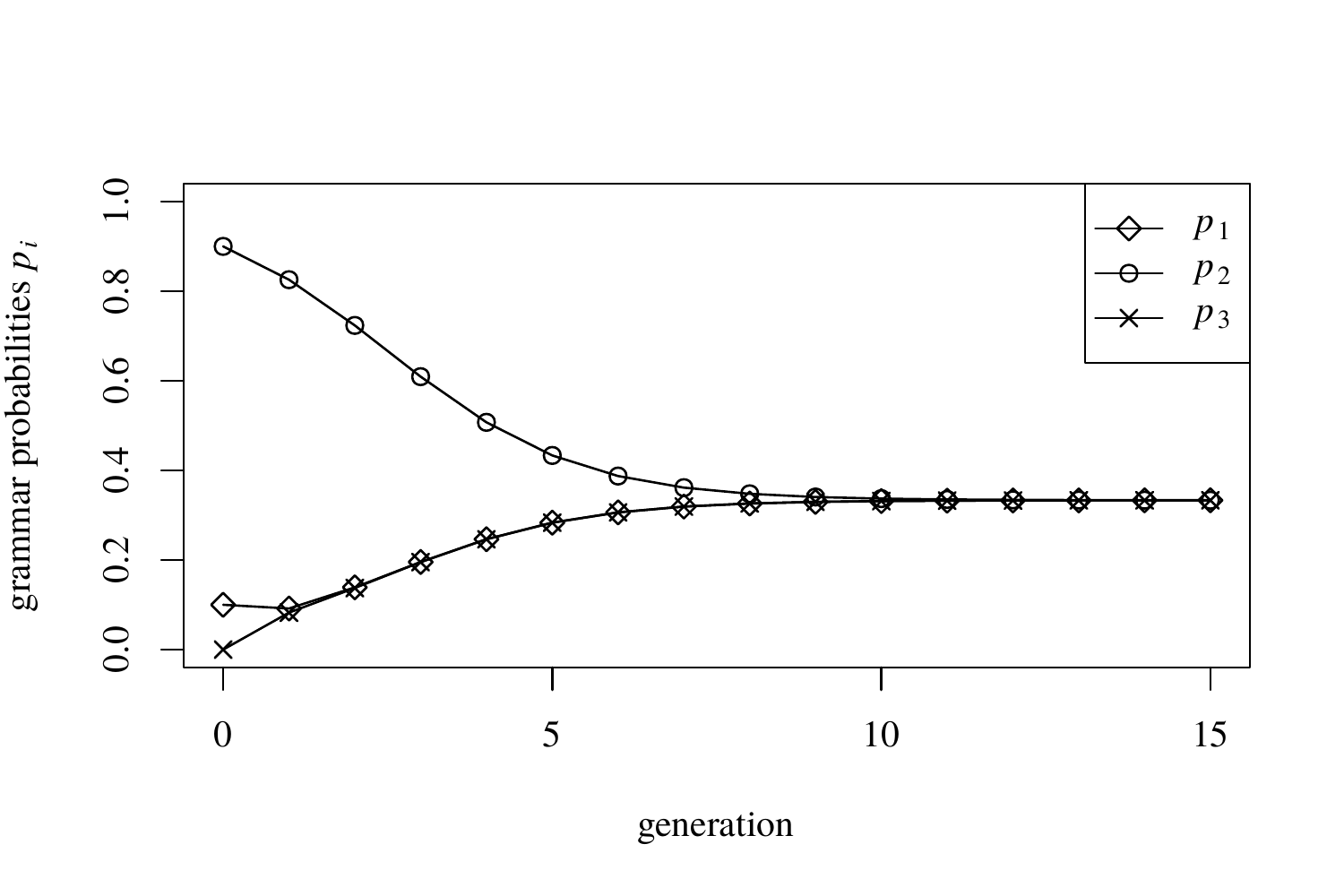}
  \caption{The trajectory from Figure \ref{fig:babelian-ternary} shown in the time dimension.}\label{fig:babelian-timedim}
\end{figure}

\section{Symmetric systems}\label{sec:symmetric}

The above analysis illustrates the procedure of sketching the qualitative behaviour of a dynamical system by way of analysing the system's rest points and their stability, when the equations governing the system's evolution cannot be solved. It also shows that true stable variation is a feature of at least some formal systems based on LRP learning. Babelian systems, of course, are far too trivial to be of any serious linguistic interest, and it remains to show that stable variation may occur in other, more realistic multiple-grammar settings.

A straightforward way of generalizing from Babelian systems is to allow some of the grammars to have unequal advantages but to maintain a symmetry condition: $a_{ij} = a_{ji}$ for all $i,j$. In three dimensions, such \emph{symmetric} systems are thus described by advantage matrices of the form
\begin{equation}\label{eq:symmetric-matrix}
  \vec A = \begin{bmatrix}
    0 & a_{12} & a_{13} \\
    a_{12} & 0 & a_{23} \\
    a_{13} & a_{23} & 0
  \end{bmatrix}
  = \begin{bmatrix}
    0 & a & b \\
    a & 0 & c \\
    b & c & 0
  \end{bmatrix}
\end{equation}
where I write $a = a_{12}$, $b = a_{13}$ and $c = a_{23}$ for convenience. Again, it is clear that these matrices satisfy the cyclical balance criterion \eqref{eq:cyclical-balance-criterion} and thus are well-defined.

Setting $p_i' - p_i = 0$ in \eqref{eq:3D-diff-eq} and solving for $p_i$ (in a manner analogous to that in the proof of Theorem \ref{thm:babelian-interior-rest-point} above) reveals that in a symmetric three-grammar system, a rest point exists at
\begin{equation}
  \vec p = \left(\frac{c}{a+b+c}, \frac{b}{a+b+c}, \frac{a}{a+b+c}\right).
\end{equation}
Continuing to assume proper advantage matrices, in other words that $a,b,c > 0$, it follows that this rest point is always contained in the interior $\intr S_3$. It is also the only solution of $p_i' - p_i = 0$ in the interior and hence the only interior rest point of a symmetric system. Furthermore, stability analysis finds that the Jacobian, when evaluated at this rest point, has eigenvalues $0 < 1$ and $1/2 < 1$; hence, the interior rest point is always asymptotically stable. For each of the vertex rest points $\vec v_1$, $\vec v_2$ and $\vec v_3$, the eigenvalues are $0 < 1$ and $2 > 1$. Thus:
\begin{theorem}\label{thm:symmetric-stable}
  Any proper, symmetric three-grammar system \eqref{eq:symmetric-matrix} has exactly one interior rest point at
  \[
  \vec p = \left(\frac{c}{a+b+c}, \frac{b}{a+b+c}, \frac{a}{a+b+c}\right).
  \]
This interior rest point is asymptotically stable, while the vertex rest points are all unstable.\qed
\end{theorem}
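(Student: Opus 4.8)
The plan is to handle the two assertions separately: the location and uniqueness of the interior rest point via the algebraic characterization of Theorem \ref{thm:rest-point-condition}, and the stability of all four rest points via the linearization criterion recalled in Section \ref{sec:babelian}.

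First I would pin down the interior rest point. For the symmetric matrix \eqref{eq:symmetric-matrix} the penalties are $c_1 = ap_2 + bp_3$, $c_2 = ap_1 + cp_3$ and $c_3 = bp_1 + cp_2$, so by Theorem \ref{thm:rest-point-condition} an interior point is a rest point exactly when $c_1p_1 = c_2p_2 = c_3p_3$. Expanding $c_1p_1 = c_2p_2$, the shared term $ap_1p_2$ cancels and leaves $bp_1p_3 = cp_2p_3$; since $p_3 > 0$ in the interior this is simply the linear relation $bp_1 = cp_2$. The other two pairings collapse in the same fashion to $ap_2 = bp_3$ and $ap_1 = cp_3$, of which only two are independent. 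Combining the proportionalities $p_2 = (b/c)p_1$ and $p_3 = (a/c)p_1$ with the normalization $p_1 + p_2 + p_3 = 1$ forces $p_1 = c/(a+b+c)$, whence the unique solution is exactly the claimed $\vec p$. Properness ($a,b,c > 0$) makes all three coordinates strictly positive, so the point genuinely lies in $\intr S_3$, and uniqueness is immediate since the proportionalities fix the solution up to the single scaling determined by normalization.

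For stability I would take the Jacobian route. One structural observation I would make explicit first: the map \eqref{eq:3D-diff-eq} sends $S_3$ into itself, so $\sum_i f_i \equiv 1$; differentiating, every column of the Jacobian sums to zero, so $(1,1,1)$ is a left eigenvector with eigenvalue $0$. This explains the perpetual zero eigenvalue and shows that stability is decided by the remaining eigenvalue(s) on the tangent space to the simplex. A cleaner way to isolate these is to eliminate $p_3 = 1 - p_1 - p_2$ and study the induced $2 \times 2$ Jacobian directly, bypassing the spurious zero. Either way the target is to show the relevant eigenvalue equals $1/2$ at the interior point and $2$ at each vertex, after which the criterion (modulus below $1$ gives asymptotic stability, modulus above $1$ gives instability) yields both halves of the theorem at once.

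The main obstacle is precisely this eigenvalue computation. Differentiating the rational maps $f_i = c_jc_k/(c_1c_2 + c_1c_3 + c_2c_3)$ and evaluating at the rest points is algebraically heavy, and the vertices are delicate because one penalty vanishes there (for instance $c_1 = 0$ at $\vec v_1$), so the quotient must be expanded with care rather than by blind substitution. I would exploit the symmetry $a_{ij} = a_{ji}$ together with the scale-invariance of \eqref{eq:3D-diff-eq} under $(a,b,c) \mapsto (\lambda a, \lambda b, \lambda c)$ — numerator and denominator both carry a factor $\lambda^2$ — to cut the parameter count and the bookkeeping, and, as the excerpt itself advises, delegate the final arithmetic to a computer algebra system, reporting only that the relevant moduli come out as $1/2$ and $2$.
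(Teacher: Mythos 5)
Your proposal is correct and takes essentially the same route as the paper: the interior rest point is found by algebra on the interior form of the dynamics (the paper ``solves'' \eqref{eq:3D-diff-eq} in the manner of the Babelian case, which is exactly what your appeal to Theorem \ref{thm:rest-point-condition} and the reduction to $bp_1 = cp_2$, $ap_2 = bp_3$, $ap_1 = cp_3$ accomplishes), and stability is settled by the Jacobian eigenvalue criterion with the heavy arithmetic delegated to a computer, your reported moduli ($1/2$ at the interior point, $2$ at the vertices) agreeing with the paper's stated eigenvalues ($0$ and $1/2$, respectively $0$ and $2$). Your additional observations --- that the zero eigenvalue is forced by the column sums of $\vec J$ vanishing, and that the vertices need careful expansion because one penalty vanishes there --- are sound refinements of, not departures from, the paper's argument.
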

\noindent Crucially, the result holds for any values of $a,b,c > 0$. We then conclude:
\begin{corollary}
  Any proper, symmetric system of three grammars tends to a state of stable variation.\qed
\end{corollary}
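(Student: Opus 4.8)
The plan is to exhibit the unique interior rest point of Theorem~\ref{thm:symmetric-stable} as a genuine state of stable variation, and then to argue that trajectories converge to it. Recall that a state of stable variation must satisfy three conditions at once: it must be a state of variation, a rest point, and asymptotically stable. I will verify all three for
\[
  \vec p = \left(\frac{c}{a+b+c},\ \frac{b}{a+b+c},\ \frac{a}{a+b+c}\right).
\]

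First I would confirm that $\vec p$ is a state of variation, i.e.\ that each coordinate is strictly below $1$. This is immediate from properness: because $a, b, c > 0$, the common denominator $a+b+c$ strictly exceeds each of the numerators $c$, $b$ and $a$, so every coordinate lies strictly between $0$ and $1$. The remaining two conditions---that $\vec p$ is a rest point and that it is asymptotically stable---are delivered verbatim by Theorem~\ref{thm:symmetric-stable}. Hence $\vec p$ is, by definition, a state of stable variation, and since this holds for all $a, b, c > 0$, every proper symmetric three-grammar system possesses one.

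It then remains to justify the word \emph{tends}. By asymptotic stability there is a neighbourhood of $\vec p$ from which all trajectories converge to it; to promote this to the behaviour of the system as a whole I would invoke the global rest-point inventory assembled earlier. The only rest points anywhere in $S_3$ are the three vertices and $\vec p$, and by Theorem~\ref{thm:symmetric-stable} the vertices are uniformly unstable. A trajectory can only halt at a rest point, the boundary rest points actively repel, and no other interior rest point is available as an alternative limit---so the expected long-run destination of a generic interior trajectory is $\vec p$. The hard part is precisely this last inference: the Jacobian criterion supplies only \emph{local} asymptotic stability, which by itself does not preclude periodic orbits or other non-convergent recurrence somewhere in the interior. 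A fully rigorous upgrade to global convergence would require ruling such behaviour out---for instance by constructing a Lyapunov function that strictly decreases along trajectories away from $\vec p$, or by exploiting the fact that the dynamics live on the two-dimensional simplex to constrain the possible limit sets. Given the structure here (a single asymptotically stable interior attractor, three repelling vertices, and no further rest points), convergence to $\vec p$ is the overwhelmingly expected outcome, which suffices for the qualitative conclusion that proper symmetric three-grammar systems settle into stable variation.
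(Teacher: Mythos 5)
Your proposal is correct and follows essentially the same route as the paper: the corollary is read off from Theorem~\ref{thm:symmetric-stable}, whose three ingredients (interior rest point, asymptotic stability, instability of the vertices) are exactly what you assemble. In fact the paper offers no proof at all beyond the \qed{} symbol---the corollary is presented as an immediate consequence of the theorem---so your explicit check that the interior point is a \emph{state of variation} (each coordinate strictly less than $1$, by properness) makes the implicit step visible, which is a small improvement. Your final paragraph is also well judged: you correctly observe that the word \emph{tends}, read as a claim about all non-vertex initial conditions, is strictly stronger than what the Jacobian eigenvalue criterion delivers, since local asymptotic stability plus instability of the boundary rest points does not by itself exclude periodic orbits or other recurrent behaviour in the interior of $S_3$. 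The paper never closes this gap either; its global claim rests on the phase-space portraits (Figure~\ref{fig:symmetric-ternary}) rather than on an argument, so your suggestion of a Lyapunov function or a planar limit-set analysis identifies genuine missing work in the source, not a defect of your own proof relative to it. In short: same decomposition as the paper, with an honest and accurate flag on the one inference both you and the paper leave unproved.
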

\noindent Figure \ref{fig:symmetric-ternary} illustrates for a particular choice of the parameters $a$, $b$ and $c$.

\begin{figure}
  \centering
  \includegraphics[scale=0.85,clip,trim=0cm 1cm 0cm 1cm]{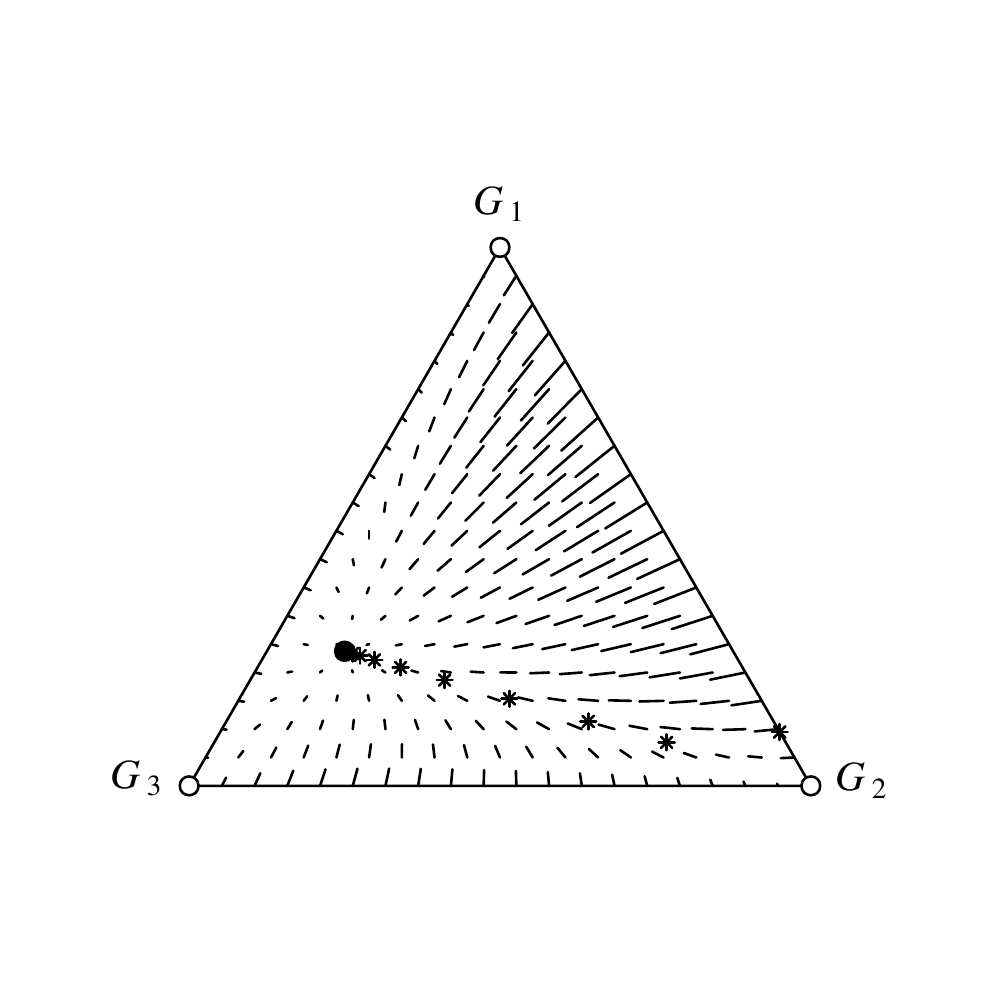}
  \caption{Phase space plot of a symmetric $3$-grammar system with $a = 0.05$, $b = 0.01$ and $c = 0.02$. Each trajectory not starting at a vertex point tends towards the stable interior rest point at $(c/D, b/D, a/D)$ with $D = a+b+c$.}\label{fig:symmetric-ternary}
\end{figure}

\section{Quasi-Babelian systems}\label{sec:quasi-babelian}

Another way of generalizing from the Babelian special case is to explore a more comprehensive class of systems in which some one grammar has either a larger or a smaller advantage than any of its competitors, the latter sharing the same amount of advantage amongst themselves. Formally, I will call a system \emph{quasi-Babelian} if constants $a,b > 0$ exist such that (1) for some unique $i$, $a_{ji} = b$ for all $j\neq i$, and (2) $a_{kj} = a$ for all $j\neq i$, for all $k\neq j$. By a relabelling of grammars, we may always take $G_1$ to correspond to the grammar having the unique advantage $b$, and I will refer to this as the \emph{canonical} quasi-Babelian case. In three dimensions, a canonical quasi-Babelian advantage matrix, then, is of the form
\begin{equation}
  \vec A = \begin{bmatrix}
    0 & a & a \\
    b & 0 & a \\
    b & a & 0
  \end{bmatrix}.
\end{equation}
Again, it can be checked that the cyclical balance criterion \eqref{eq:cyclical-balance-criterion} is satisfied.

The advantage matrix now has just two independent parameters, $a$ and $b$, and consequently algebraic manipulation of the equations \eqref{eq:3D-diff-eq} becomes easy. Setting $p_i' - p_i = 0$ and solving for $p_i$ reveals that with a canonical quasi-Babelian advantage matrix, \eqref{eq:3D-diff-eq} has either three or four rest points in the simplex $S_3$. In addition to the vertices $\vec v_1$, $\vec v_2$ and $\vec v_3$, a fourth solution exists in the interior at the point
\begin{equation}
  \vec p^* = \left( \frac{1}{5-2\rho}, \frac{2-\rho}{5-2\rho}, \frac{2-\rho}{5-2\rho} \right)
\end{equation}
whenever $0 < \rho < 2$, where $\rho = b/a$ gives the ratio of the two advantage parameters. At $\rho = 2$, this solution coalesces with the vertex $\vec v_1$.

This rest point $\vec p^*$ entails a sort of behaviour which is entirely unattested in Babelian and symmetric systems: a bifurcation. For small values of the ratio $\rho = b/a$ -- that is, for values of $b$ which are small in comparison to $a$ -- the interior rest point $\vec p^*$ exists. As $\rho$ is increased, this rest point moves towards the vertex $\vec v_1$ and coincides with the latter at the critical value $\rho = \rho_c = 2$ of the bifurcation parameter $\rho$. For ratios $\rho \geq 2$, the system consequently only has the three vertex rest points. The following theorem establishes the stability of these rest points in response to the bifurcation; Figures \ref{fig:quasi-sequence}--\ref{fig:quasi-bifurcation} illustrate.
\begin{theorem}
  Assume a canonical quasi-Babelian $3$-grammar system with advantage ratio $\rho = b/a$. Then
  \begin{enumerate}
    \item the vertex rest points $\vec v_2 = (0,1,0)$ and $\vec v_3 = (0,0,1)$ are always unstable;
    \item the vertex rest point $\vec v_1 = (1,0,0)$ is asymptotically stable if $\rho \geq 2$ and unstable if $0 < \rho < 2$;
    \item the interior fixed point $\vec p^* = \left( \frac{1}{5-2\rho}, \frac{2-\rho}{5-2\rho}, \frac{2-\rho}{5-2\rho} \right)$ is asymptotically stable whenever it exists, i.e.~when $0 < \rho < 2$.
  \end{enumerate}
\end{theorem}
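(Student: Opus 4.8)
The plan is to determine the stability of each of the four rest points by the Jacobian eigenvalue criterion used above for Babelian systems: linearise \eqref{eq:3D-diff-eq} at each point and compare the moduli of the eigenvalues against $1$. The canonical quasi-Babelian structure yields the convenient penalties $c_1 = a(1-p_1)$, $c_2 = bp_1 + ap_3$ and $c_3 = bp_1 + ap_2$, and, crucially, the system is invariant under the interchange $2\leftrightarrow 3$ of grammar labels. Because $f_1 + f_2 + f_3 \equiv 1$, the columns of the full $3\times 3$ Jacobian sum to zero, so it always carries a trivial eigenvalue $0$ and stability is decided by the two eigenvalues of the linearisation restricted to the tangent space of the simplex. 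First I would dispatch the two ``ordinary'' vertices: since the $2\leftrightarrow 3$ symmetry carries $\vec{v}_2$ to $\vec{v}_3$, it suffices to linearise at $\vec{v}_2$, where a short computation gives tangent eigenvalues $0$ and $\rho+1$. As $\rho+1 > 1$ for every $\rho > 0$, both $\vec{v}_2$ and $\vec{v}_3$ are unstable, which is part~1.

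For the distinguished vertex $\vec{v}_1$ the same linearisation gives tangent eigenvalues $0$ and $2/\rho$ (setting $\rho = 1$ recovers the Babelian vertex eigenvalues $0$ and $2$, a useful check). Hence $\vec{v}_1$ is asymptotically stable when $2/\rho < 1$, i.e.\ $\rho > 2$, and unstable when $2/\rho > 1$, i.e.\ $0 < \rho < 2$; this settles part~2 except at the knife-edge $\rho = 2$. That marginal value is the main obstacle: there the eigenvalue $2/\rho$ equals exactly $1$, the rest point is non-hyperbolic, and the linear criterion is silent. I would resolve it using the fact that the diagonal $p_2 = p_3 =: t$ (with $p_1 = 1-2t$) is invariant under the $2\leftrightarrow 3$ symmetry and tangent to the centre eigendirection, so the dynamics restrict to a one-dimensional centre map on this axis. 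At $\rho = 2$ this map is
\[
  t' = \frac{2t}{2+t}, \qquad\text{whence}\qquad t' - t = \frac{-t^2}{2+t} < 0 \ \text{ for } t > 0.
\]
Thus motion along the axis is attracted to $t = 0$ from the interior side, and since the transverse (antisymmetric) direction carries the strongly contracting eigenvalue $0$, $\vec{v}_1$ is asymptotically stable at $\rho = 2$ as well, completing part~2.

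For the interior rest point $\vec{p}^*$ I would exploit the symmetry systematically. Since $\vec{p}^*$ lies on the invariant axis $p_2 = p_3$, its tangent space to $\intr S_3$ splits into a symmetric (along-axis) direction $(-2,1,1)$ and an antisymmetric direction $(0,1,-1)$, each left invariant by the Jacobian; this block-diagonalises the linearisation into two scalar eigenvalues. Linearising along the axis and transverse to it, I expect $\lambda_\parallel = \rho/2$ and $\lambda_\perp = (2-\rho)/2$. For every $\rho$ with $0 < \rho < 2$ both lie strictly between $0$ and $1$, so $\vec{p}^*$ is asymptotically stable whenever it exists, which is part~3. The picture is then internally coherent as a transcritical bifurcation at $\rho = 2$: as $\rho \to 2^-$ one has $\lambda_\parallel \to 1$ and $\lambda_\perp \to 0$, matching the eigenvalues $2/\rho \to 1$ and $0$ of $\vec{v}_1$, so the two rest points collide and exchange stability exactly as asserted. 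The only genuinely delicate step is the non-hyperbolic case $\rho = 2$; everywhere else the conclusion follows from mechanical, if tedious, eigenvalue computation, best delegated to a computer algebra system as elsewhere in the paper.
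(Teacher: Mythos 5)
Your proposal is correct, and its computational core coincides with the paper's own proof: the paper applies exactly the same Jacobian eigenvalue criterion and reports precisely the eigenvalues you obtain --- $0$ and $1+\rho$ at $\vec v_2$ and $\vec v_3$; $0$ and $2a/b = 2/\rho$ at $\vec v_1$; and $0$, $\rho/2$, $1-\rho/2$ at $\vec p^*$. Two differences are worth recording. First, where the paper delegates the eigenvalue computations to a computer and merely reports the results, you derive them by hand by exploiting the $2\leftrightarrow 3$ relabelling symmetry to split the tangent space into the symmetric direction $(-2,1,1)$ and the antisymmetric direction $(0,1,-1)$; this makes the proof self-contained and explains structurally why the bifurcation at $\rho = 2$ is an exchange of stability taking place along the invariant diagonal $p_2 = p_3$. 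Second, and more substantively, you treat the non-hyperbolic case $\rho = 2$, which the paper's proof does not: the paper establishes stability of $\vec v_1$ only under $2a/b < 1$, i.e.\ $\rho > 2$, and is silent at $\rho = 2$, even though the theorem asserts asymptotic stability for $\rho \geq 2$, where the linear criterion cited from Drazin is inconclusive. Your centre-manifold argument closes this gap, and the computation checks out: on the invariant diagonal the map is $t' = 2t/\bigl(\rho + (5-2\rho)t\bigr)$, which at $\rho = 2$ reduces to $t' = 2t/(2+t)$, giving $t'-t = -t^2/(2+t) < 0$ for $t>0$, and the transverse direction indeed carries the eigenvalue $0$. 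So on this marginal case your proof is more complete than the paper's. The one stylistic caveat is that the final step --- combining attraction on the invariant centre direction with contraction transverse to it --- tacitly invokes the reduction principle for discrete-time maps, which should be stated explicitly (with a reference) since it is the only step that goes beyond the mechanical eigenvalue test used everywhere else in the paper.
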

\begin{proof}
  For the two vertices $\vec v_2$ and $\vec v_3$, the eigenvalues of the Jacobian are $0$ and $1 + \rho > 1$. Hence, these points are unstable.

  At the vertex $\vec v_1$, the Jacobian has eigenvalues $0$ and $2a/b$. Hence, this rest point is asymptotically stable if $2a/b < 1$, i.e.~if $b/a = \rho > 2$, and unstable if $b/a = \rho < 2$.

  At the interior rest point $\vec p^*$, the Jacobian has eigenvalues $0 < 1$, $1/2\rho$ and $1 - 1/2\rho < 1$. Thus, the interior rest point is asymptotically stable whenever $1/2\rho < 1$, i.e.~when $\rho < 2$.
\end{proof}

\begin{figure}
  \centering
  \includegraphics[width=\textwidth]{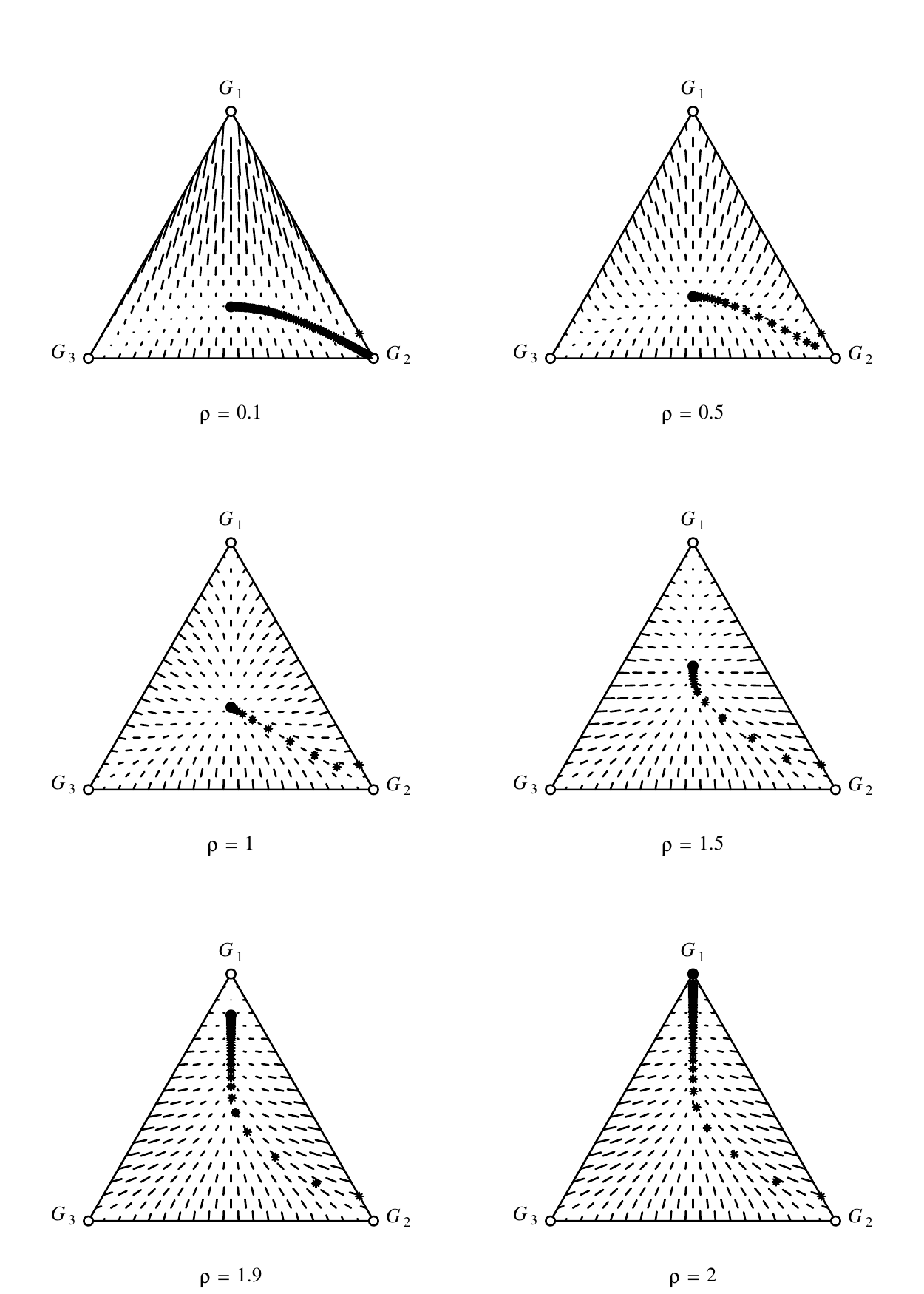}
  \caption{Phase space plots of the canonical quasi-Babelian $3$-grammar system for various advantage ratios $\rho = b/a$; $\rho =1$ corresponds to the strictly Babelian special case. At $\rho = 2$ a bifurcation occurs in which the interior rest point joins the vertex $\vec v_1$, reversing the latter's stability.}\label{fig:quasi-sequence}
\end{figure}

\begin{figure}
  \centering
  \includegraphics[scale=0.85]{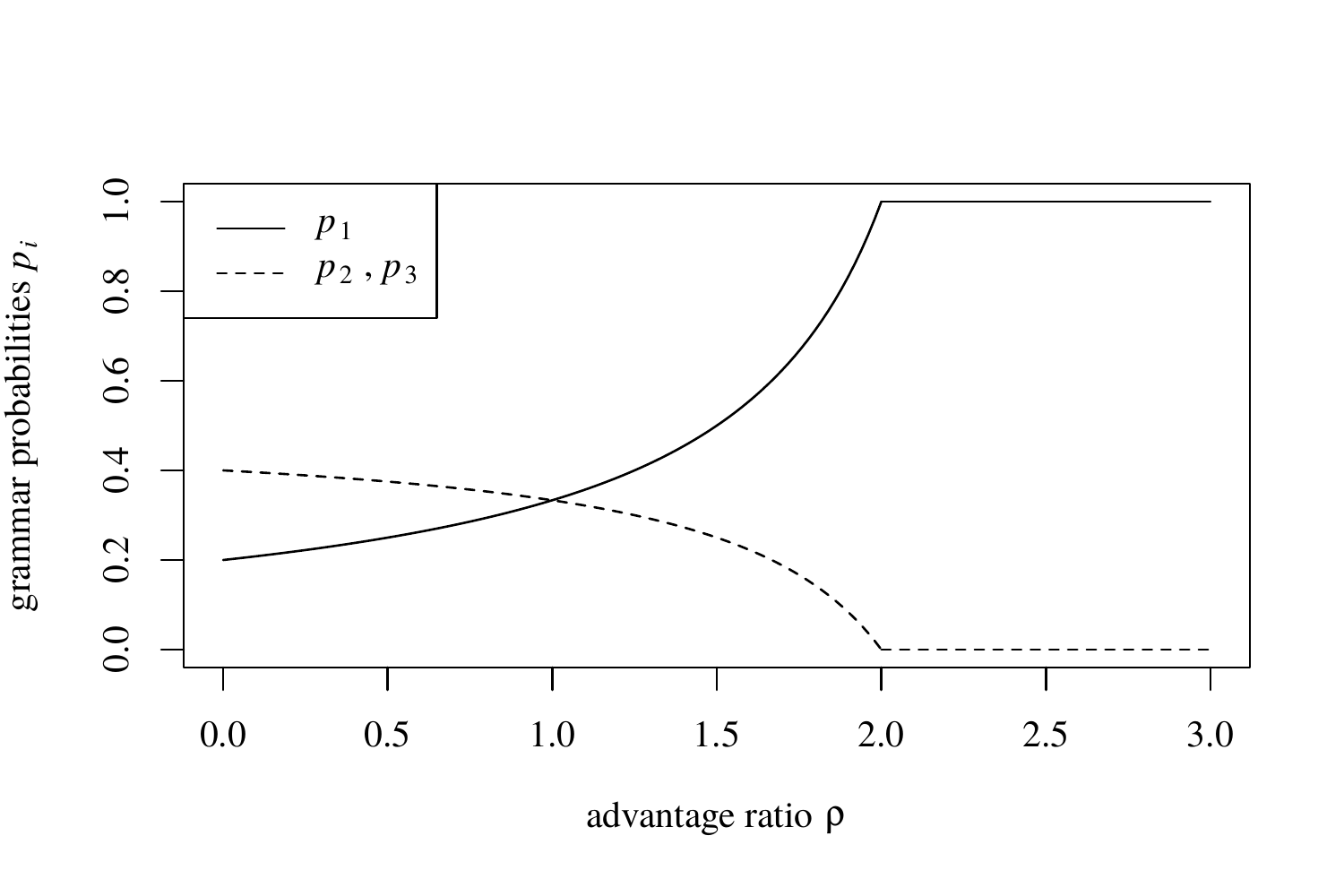}
  \caption{Orbit diagram of quasi-Babelian $3$-grammar systems, illustrating the stable limiting state of the system when started from any non-vertex state. The solid curve gives the value of $p_1$ at the stable rest point, while the dashed curve gives the value of $p_2 = p_3$.}\label{fig:quasi-bifurcation}
\end{figure}

\section{Naive learning}\label{sec:naive-learning}

Above, I have explored a generalization of the $2$-grammar Variational Learner. This generalization has shown that stable variation is an intrinsic feature of many multiple-grammar systems based on LRP learning. The specific systems studied and their interrelationships are summarized in Figure \ref{fig:systems}; future work will need to explore systems that lie outside these classes of systems.

\begin{figure}
  \centering
  \includegraphics[scale=0.65]{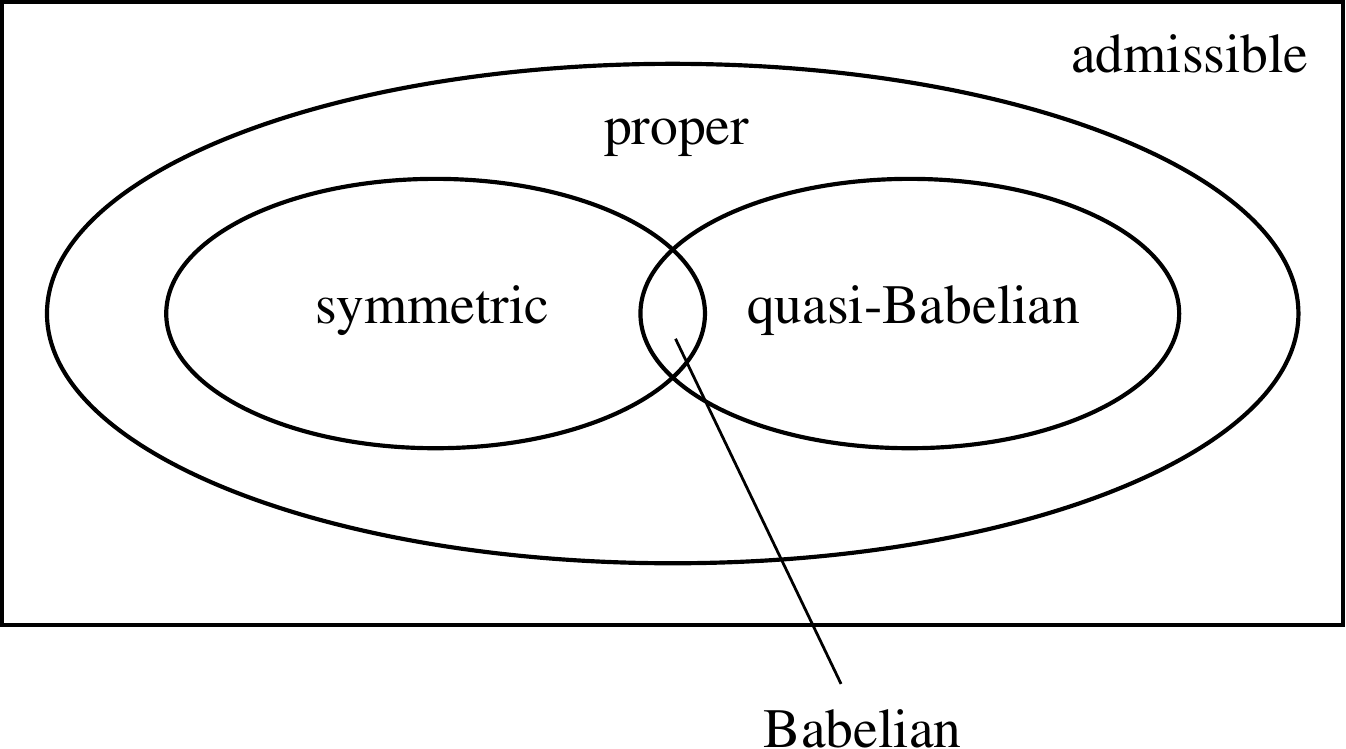}
  \caption{Set relations among the $3$-grammar systems studied in this paper, in the universe of all admissible systems (all $3\times 3$ advantage matrices satisfying the cyclical balance criterion): all Babelian systems are both symmetric and quasi-Babelian, and all symmetric and quasi-Babelian systems are proper.}\label{fig:systems}
\end{figure}

Crucially, the preceding analysis relies on the straightforward generalization of LRP learning for $n$ options given in Algorithm \ref{alg:LRP-nD}. From a psycholinguistic point of view, this way of treating the learner implies, for better or worse, that the learner must keep track of $n$ independent probabilities. Considering that even a few dozens of (binary) grammatical parameters result in an astronomical search space for the learner, the straightforward extension of the LRP algorithm may be argued to be unrealistic on psychological grounds.\footnote{The issue is in fact convoluted: on the one hand, the number of grammatical parameters is not known with any certainty (for one recent estimate, see \citealt[1687]{LonGua2009}, who suggest $63$ parameters in the DP domain and note that in general ``UG parameters number at least in the hundreds''), and on the other hand, the human brain \emph{is} capable of storing astronomical quantities of information \citep{BarEtal2015}. I set the issue aside here -- for present purposes, what matters is that stable variation is attested both in the straightforward $n$-grammar generalization of LRP learning and in the parametrically constrained Naive Learner, as we will presently see.}

An alternative, explored to some extent in \citet{Yan2002}, is to have the learner operate in a parametrically constrained space. That is to say, instead of operating on $n$ \emph{grammar} probabilities $\pi_1, \dots , \pi_n$, suppose the learner operates on $N$ \emph{parameter} probabilities $\xi_1, \dots ,\xi_N$, where $n = 2^N$ and $\xi_i$ gives the probability of the $i$th binary parameter being set on. To recover the grammar probabilities, it suffices to multiply the relevant parameter probabilities:
\begin{equation}
  P(G_{\sigma (1)\sigma (2)\dots \sigma (N)}) = \prod_{i=1}^N \xi_i^{\sigma (i)} (1-\xi_i)^{1-\sigma (i)}
\end{equation}
is the probability of the grammar $G_{\sigma (1)\sigma (2)\dots \sigma (N)}$ being selected, with $\sigma (i) = 1$ if the $i$th parameter is to be set on and $\sigma (i) = 0$ if the $i$th parameter is to be set off for this particular grammar.

Since what gets rewarded or punished is the selection of entire grammars and not the selection of individual parameter values, the learner now faces the problem of not knowing which parameter setting(s) to blame in case of parsing failure \citep{Yan2002}. One way of attempting to overcome this problem is the following naive learning algorithm.
\begin{algo}[Naive Parameter Learner (NPL); \citealp{Yan2002}]
  \leavevmode\vspace{-\baselineskip}
  \begin{enumerate}
    \item Set $\xi_i = 0.5$ for all $i$ initially.
    \item Pick grammar by setting $i$th parameter on with probability $\xi_i$.
    \item Receive input sentence $x$.
    \item If grammar parses $x$:
      \begin{enumerate}[label=\alph*.]
        \item If $i$th parameter was on, increase the value of $\xi_i$ by replacing $\xi_i$ with $\xi_i + \gamma (1-\xi_i)$, where $\gamma$ is a learning rate.
        \item Else decrease the value of $\xi_i$ by replacing it with $(1-\gamma)\xi_i$.
      \end{enumerate}
    \item If grammar does not parse $x$:
      \begin{enumerate}[label=\alph*.]
        \item If $i$th parameter was on, decrease the value of $\xi_i$ by replacing it with $(1-\gamma)\xi_i$.
        \item Else increase the value of $\xi_i$ by replacing it with $\xi_i + \gamma(1-\xi_i)$.
      \end{enumerate}
    \item Repeat steps 2--5 for $T$ input tokens.
  \end{enumerate}
\end{algo}

Having learners operate in a parametrically constrained space and employing a learning algorithm such as NPL complicates the study of the diachronic behaviour of such a system, since analogues of the learning-theoretic limiting approximations \eqref{eq:2D-LRP-approximation} and \eqref{eq:nD-LRP-approximation} are not available. It is, however, possible to study special cases with the help of computer simulations. In what follows, I will explore one such simple special case and show that stable variation is, again, a feature of at least some systems based on Naive Parameter Learning in a parametric space.

For this, suppose for simplicity that Universal Grammar (UG) provides just two elements, a ``noun'' N and a ``determiner'' D, and two parameters:
\begin{enumerate}
  \item whether determiner can be null (on setting) or has to be overt (off setting)
  \item whether grammar is head-final (on setting) or head-initial (off setting)
\end{enumerate}
The four grammars then parse, and fail to parse, strings as follows:
\begin{center}
  \begin{tabular}{l|ll}
    & parses & fails to parse \\
    \hline
    $G_{11}$ & N, DN & ND\\
    $G_{10}$ & N, ND & DN\\
    $G_{01}$ & DN & N, ND\\
    $G_{00}$ & ND & N, DN
  \end{tabular}
\end{center}
Assuming true optionality, i.e.~that grammars $G_{11}$ and $G_{10}$ generate the two types of sentence with probability $0.5$, it is easy to work out the probability of each possible input string the learner may encounter:
\begin{equation}
  \begin{split}
      P(\textnormal{N}) &= 0.5 P(G_{11}) + 0.5 P(G_{10}) \\
      &= 0.5 x_1x_2 + 0.5 x_1(1 - x_2) \\
      &= 0.5 x_1 \\
      & \\
      P(\textnormal{DN}) &= 0.5 P(G_{11}) + P(G_{01}) \\
      &= 0.5 x_1x_2 + (1-x_1)x_2 \\
      &= x_2(1 - 0.5 x_1) \\
      & \\
      P(\textnormal{ND}) &= 0.5 P(G_{10}) + P(G_{00}) \\
      &= 0.5 x_1(1-x_2) + (1-x_1)(1-x_2) \\
      &= (1-x_2)(1 - 0.5x_1)
  \end{split}
\end{equation}
Here, $x_1$ and $x_2$ are the population-level parameter probabilities (corresponding to $p_i$ in the LRP formulation). The penalty probabilities of the four grammars are then found to be
\begin{equation}\label{eq:NPL-penalties}
  \left\{
    \begin{aligned}
      c(G_{11}) &= (1-x_2)(1 - 0.5x_1) \\
      c(G_{10}) &= x_2(1 - 0.5x_1) \\
      c(G_{01}) &= 0.5x_1 + (1-x_2)(1 - 0.5x_1) \\
      c(G_{00}) &= 0.5x_1 + x_2(1 - 0.5x_1)
    \end{aligned}
  \right.
\end{equation}
Substituting $x_1 = x_2 = 1$ in \eqref{eq:NPL-penalties} yields
\begin{equation}
  \left\{
    \begin{aligned}
      c(G_{11}) &= 0 \\
      c(G_{10}) &= 0.5 \\
      c(G_{01}) &= 0.5 \\
      c(G_{00}) &= 1
    \end{aligned}
  \right.
\end{equation}
which shows that if $G_{11}$ is the unique target grammar, then the NPL algorithm will eventually arrive at the right parameter probabilities $\xi_1 = 1$ and $\xi_2 = 1$, as long as the learner has enough time to tweak the probabilities (Figure \ref{fig:NPL-learner}). Performing the requisite substitutions shows that the same holds for the remaining three grammars $G_{10}$, $G_{01}$ and $G_{00}$, as well.

\begin{figure}
  \centering
  \includegraphics[scale=0.85]{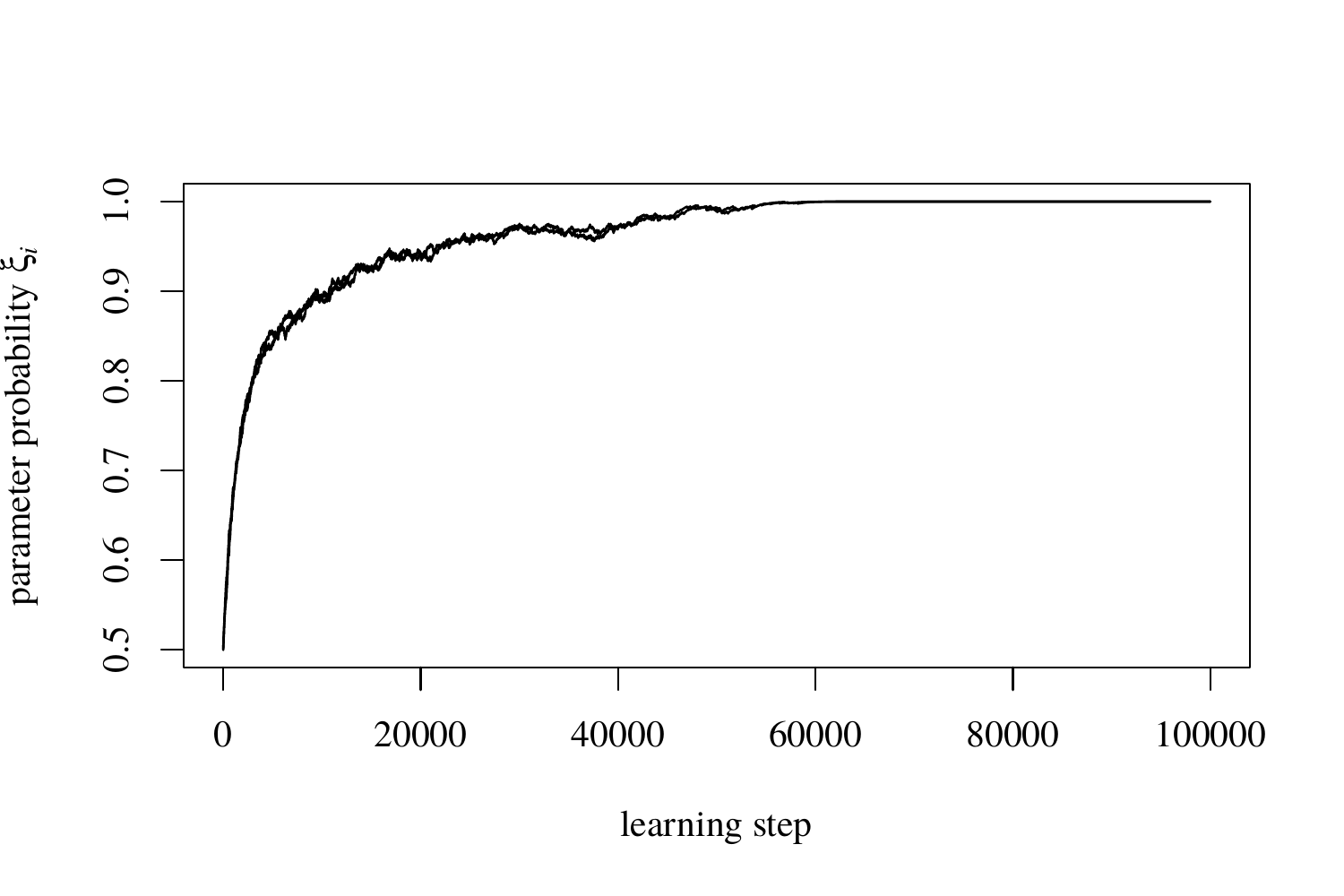}
  \caption{A two-parameter Naive Parameter Learner at the vertex $\vec x = (1,1)$ ($G_{11}$ is the unique target grammar); values of $\xi_1$ and $\xi_2$ from one computer simulation. Both of the learner's parameter probabilities $\xi_1$ and $\xi_2$ tend to $1$.}\label{fig:NPL-learner}
\end{figure}

The four vertices, at which one of the four grammars has total use, are thus found to be rest points for the above toy system. What about their stability? To explore this question, we need to set the learner in a mixed environment (at a state in the interior $\intr S_4$ of the four-simplex of \emph{grammar} probabilities). Figure \ref{fig:NPL-mixed-learner} shows the behaviour of the learner in the mixed environment $(x_1, x_2) = (0.99, 0.99)$, corresponding to $P(G_{11}) = 0.99^2 = 0.9801$ use of the grammar $G_{11}$. Convergence to the vertex no longer occurs, and the diachronic implications of this become manifest when we set up a sequence of such learners, the output of one generation again feeding as input to the following generation: when started from a mixed state, the system fails to converge to the vertex rest point at which $G_{11}$ has dominance, and instead appears to be attracted to an interior rest point, that is to say, towards a state of stable variation (Figure \ref{fig:NPL-mixed-diachrony}).

\begin{figure}
  \centering
  \includegraphics[scale=0.85]{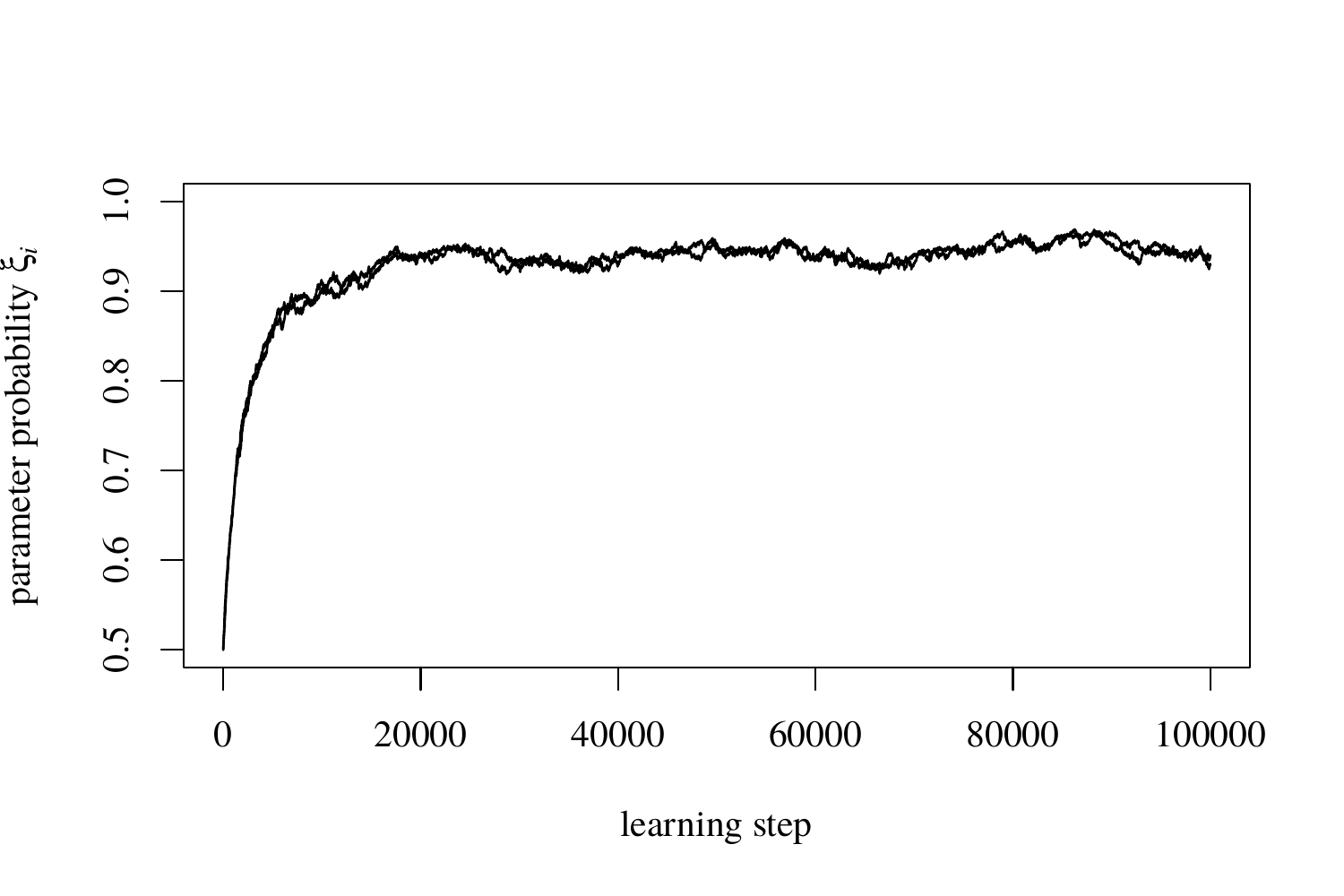}
  \caption{A two-parameter Naive Parameter Learner at the interior point $\vec x = (0.99, 0.99)$.}\label{fig:NPL-mixed-learner}
\end{figure}

\begin{figure}
  \centering
  \includegraphics[scale=0.85]{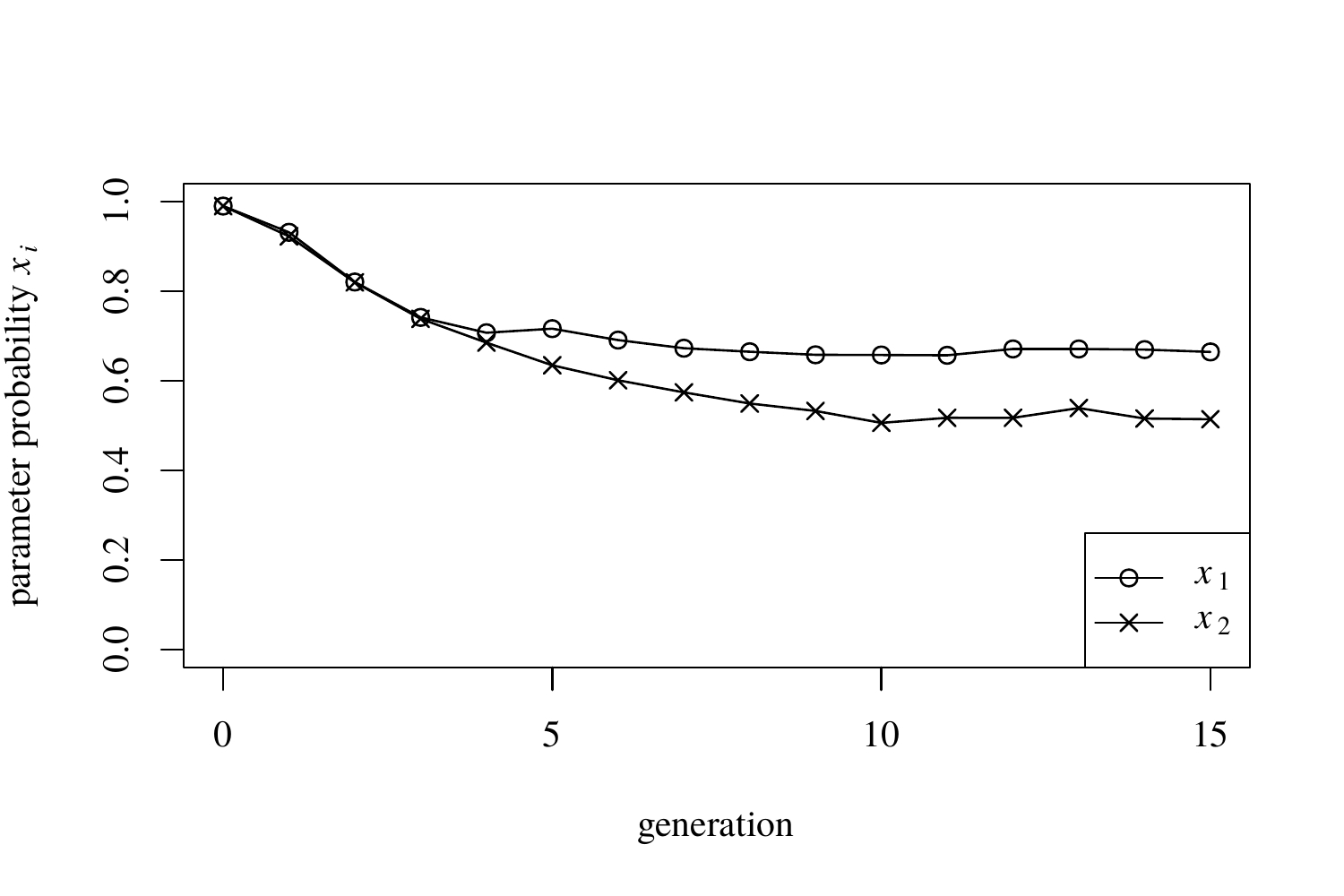}
  \caption{Diachrony for a sequence of Naive Parameter Learners from the initial state $\vec x = (0.99, 0.99)$. Convergence to the vertex $\vec x = (1,1)$ does not occur, suggesting that this vertex is an unstable rest point.}\label{fig:NPL-mixed-diachrony}
\end{figure}

\section{Conclusions and conjectures}

In this paper, I have shown that diachronically stable variation arises in many kinds of settings of grammar competition, as long as more than two grammars are represented in the learner's environment. In addition to a systematic study of the $n$-grammar LRP learning algorithm in Sections \ref{sec:2D}--\ref{sec:quasi-babelian}, the preliminary exploration of a toy parametric UG in Section \ref{sec:naive-learning} points to the conclusion that stable variation occurs in the parametrically constrained Naive Parameter Learner as well.

The results of this paper invite experimentalists to look for evidence of stable variation in a specific kind of situation -- complex language contact. Indeed, given Yang's \citeyearpar{Yan2000} Fundamental Theorem, more than two grammars \emph{must} be present in the learner's environment for stable variation to occur, if language acquisition operates along the lines of linear reward--penalty learning. This is a necessary but not a sufficient condition -- above we have seen, for example, that quasi-Babelian systems exhibit a phase transition between a phase in which stable variation occurs and one in which it does not occur (the most advantageous grammar instead claiming, eventually, all probability mass). Yet there is a kind of fatalism to these results: all symmetric systems, for instance, \emph{always} tend to an attractor which is a state of stable variation by Theorem \ref{thm:symmetric-stable}. It thus bears stressing that whenever stable variation occurs in these models, it is not due to extraneous factors such as social evaluations or population dynamics; stable variation follows from the nature of the LRP learning algorithm itself.

It may be instructive to consider this point in a little more detail. Thus consider step 4.b of Algorithm \ref{alg:LRP-nD}, corresponding to parsing failure. Here the algorithm tells us that whenever the grammar chosen by the learner, $G_k$, fails to parse a sentence, the learner updates the $k$th probability to become $\pi_k = (1-\gamma )\pi_k$. Thus the probability $\pi_k$ is diminished, and for all the grammar probabilities to keep summing to unity, it follows that some of the remaining probabilities need to be increased. From 4.b, we find that the learner actually updates every other probability $\pi_j$, $j \neq k$, to become
\begin{equation}\label{eq:nLRP-update}
  \frac{\gamma}{n-1} + (1-\gamma ) \pi_j.
\end{equation}
It is not difficult to check that these choices imply $\sum_i \pi_i = 1$, as desired. The consequences of choosing the update \eqref{eq:nLRP-update} over other possible choices, however, are nontrivial. Note that this manner of performing the update means that every grammar (apart from $G_k$, which failed) gets boosted by the same amount. This, then, means that the probability vector $\vec \pi = (\pi_1, \dots ,\pi_n)$ that describes the learner's grammar probabilities is shifted towards the centre $(1/n, \dots ,1/n)$ of the simplex at every occasion of parsing failure. When this mechanism is iterated over a diachronic sequence of learners, the effect gets amplified and, as we have seen, in some cases leads to diachronically stable variation. This observation also explains why the two-grammar version of the same algorithm behaves so differently: in this case, whenever one of the grammars fails to parse an input sentence, there is just one other grammar whose probability to boost. Consequently the probability vector describing the learner's state drifts towards dominance by this other grammar rather than towards a mixed state.

I would like to conclude by putting forward the following two conjectures, each supported by the special cases studied above but whose proofs have so far been elusive in the general case: (1) that \emph{any} $n$-grammar system with a proper advantage matrix has either $n$ rest points (the vertices) or $n+1$ rest points (the vertices plus one rest point in the interior of the simplex); and (2) that in \emph{any} proper system, if the interior rest point exists, it is necessarily asymptotically stable. If these results were to carry over to the NPL algorithm, too, the consequence would be clear: diachronic systems of learners operating on linear reward--penalty learning or variants thereof in multiple-grammar environments display a good deal of stable variation. Whether this is acceptable, or whether instead the above results call for a re-evaluation of the assumptions that underlie probabilistic language acquisition algorithms, needs to be answered by empirical work into the occurrence of stable variation in real-life language communities.

\section*{Acknowledgements}

Apart from the DiGS 18 pre-conference workshop on diachronic stability, portions of this work were presented at the 2016 Annual Meeting of the Linguistics Association of Great Britain (York, September 2016). I wish to thank both audiences, as well as Ricardo Bermúdez-Otero, George Walkden and an anonymous reviewer for feedback. All remaining errors and absurdities are, naturally, mine. The research here reported was made possible by generous financial support from Emil Aaltonen Foundation and The Ella and Georg Ehrnrooth Foundation.

\setlength{\bibsep}{0pt}

\end{document}